\documentclass{article}

\usepackage[preprint]{neurips_2024}

\usepackage[utf8]{inputenc}
\usepackage[T1]{fontenc}
\usepackage{amsmath,amssymb,amsthm,mathtools}
\usepackage{hyperref}
\usepackage{url}
\usepackage{booktabs}
\usepackage{enumitem}
\usepackage{graphicx}
\usepackage{natbib}
\usepackage{xcolor}
\usepackage{algorithm}
\usepackage{algorithmic}
\usepackage{float}

\theoremstyle{plain}
\newtheorem{assumption}{Assumption}
\newtheorem{proposition}{Proposition}
\newtheorem{theorem}{Theorem}
\newtheorem{corollary}{Corollary}

\newtheorem{definition}{Definition}
\theoremstyle{remark}

\newcommand{\E}{\mathbb{E}}
\newcommand{\Pbb}{\mathbb{P}}

\newcommand{\cL}{\mathcal{L}}

\newcommand{\cH}{\mathcal{H}}
\newcommand{\cN}{\mathcal{N}}
\newcommand{\Var}{\mathrm{Var}}
\newcommand{\Cov}{\mathrm{Cov}}

\newcommand{\Rmix}{R_{\mathrm{mix}}}
\newcommand{\Rweight}{R_{\mathrm{weight}}}
\newcommand{\Beff}{B_{\mathrm{eff}}}
\newcommand{\hatmu}{\widehat{\mu}}
\newcommand{\hattau}{\widehat{\tau}}
\newcommand{\hatm}{\widehat{m}}
\newcommand{\hatI}{\widehat{I}}
\newcommand{\hatY}{\widehat{Y}}
\newcommand{\barY}{\bar{Y}}
\newcommand{\barI}{\bar{I}}

\title{How Wrong Can Your Counterfactual Be? Quantifying Confounding Bias for Continuous Treatments without a Control Group}

\author{
  Yu Wang\thanks{Department of Economics, Cornell University. Work initiated while at Cornell University. Correspondence to \texttt{yw2322@cornell.edu}.}
  \and
  Xiangchen Liu\thanks{Department of Family and Consumer Sciences, California State University, Long Beach. \texttt{xiangchen.liu@csulb.edu}.}
  \and
  Siguang Li\thanks{Society Hub, Hong Kong University of Science and Technology (Guangzhou). \texttt{siguangli@hkust-gz.edu.cn}.}
}

\begin{document}
\maketitle

%% ==================================================================
%% ABSTRACT
%% ==================================================================
\begin{abstract}
Regulatory stress testing poses a causal question: how would portfolio credit losses change if the macroeconomy followed an adverse counterfactual path? Yet standard practice remains predictive and is therefore vulnerable to omitted-variable bias. We propose a partial identification framework for causal stress testing in panel data with a continuous common treatment and no control group. By assuming that the unobserved confounder affects outcome and macro variables additively, we derive a closed-form confounding envelope parameterized by two interpretable sensitivity parameters. We further analyze two practical estimators—recursive rollout and direct multi-horizon prediction—derive non-asymptotic error bounds, and characterize when recursive compounding makes direct estimation preferable. For inference, we combine the identification envelope with importance-weighted conformal prediction, yielding finite-sample intervals that separate estimation uncertainty from identification uncertainty under covariate shift. In semi-synthetic experiments built from real U.S. unemployment paths, standard high-accuracy predictive models remain causally biased and substantially under-cover, whereas the proposed framework achieves near-nominal coverage across stress horizons.
\end{abstract}

%% ==================================================================
%% 1. INTRODUCTION
%% ==================================================================
\section{Introduction}
\label{sec:intro}

Regulatory stress testing\footnote{Under the Dodd--Frank Act, the Federal Reserve's Comprehensive Capital Analysis and Review requires banks to project credit losses under hypothetical macroeconomic scenarios \citep{federalreserve2024ccar}.}, a central modeling task at large financial institutions, asks: how would a bank's credit losses respond if the macroeconomy---say, unemployment and house prices---followed a prescribed adverse path? Answering this question accurately is fundamentally a causal problem---it is a statement about what \emph{would} happen under a counterfactual intervention on the macro variables. However, the methods used in practice are predictive: banks fit regression or machine learning models on historical data and extrapolate under the stress scenario \citep{petropoulos2022ml}. 

Fundamentally, any machine learning model that treats the macro variables as exogenous inputs suffers from an omitted variable problem---unobserved factors that drive both the macro path and credit outcomes are left out of the model, systematically biasing counterfactual predictions \citep{BernankeGertlerGilchrist1999,AdrianBoyarchenkoGiannone2019,ramey2016shocks,bica2020timeseriesdeconfounder}. Surprisingly, this tension has received little attention, with \citet{gao2017causalstress} and \citet{mauri2023causalAI} the notable exceptions, both of which propose learning a causal network from data and using it to simulate stress scenarios. While a useful first step, such approaches do not provide a formalized framework for causal stress testing.

Two features of stress testing render classical causal inference methods---from synthetic controls \citep{abadie2010synthetic} to their ML counterparts like matrix completion \citep{athey2021matrix}---inapplicable: the absence of a control group and the continuous nature of the treatment. The hypothetical macro scenario affects all units in the economy simultaneously, and its value varies continuously rather than taking a binary form. Recently, \citet{cerqua2023mlcm} propose an identification and estimation framework for counterfactual forecasting without a control group, but the method is limited to binary treatment. Together, these obstacles make point identification infeasible without implausible assumptions.

In this paper, we propose a partial identification and estimation framework for causal stress testing. Partial identification, pioneered by \citet{manski1990bounds,manski2003partial} and recently extended to the ML setting \citep{kallus2019confounding,zhang2022partial,frauen2024neuralcsa}, is designed for settings where the data and assumptions are not strong enough to achieve point identification, but are strong enough to achieve set identification: a credible range of values consistent with what we know. 

Our framework operates on individual-level panel data---outcomes observed for many accounts over many time periods\footnote{For example, a credit card portfolio where each account's monthly balance, payment, and delinquency status are tracked over time, while the macroeconomy (unemployment, interest rates) moves in the background.}---where the goal is to estimate how credit risk changes under a hypothetical macro scenario. The identification difficulty stems from an omitted variable problem: an unobserved variable simultaneously affects both individual-level credit risk and the macro variable. The key assumption is that the unobserved confounder enters both the outcome and the macro variable additively, which is aligned with the literature on latent financial conditions as common drivers of macroeconomic and credit dynamics \citep{BernankeGertlerGilchrist1999,AdrianBoyarchenkoGiannone2019}.

We first derive the closed-form bound explicitly in terms of two interpretable sensitivity parameters \citep{carnegie2016sensitivity,cinelli2020sensemakr}, which govern the strength of confounding in the macro and outcome equations. These parameters can be anchored empirically using auxiliary data such as the Chicago Fed National Financial Conditions Index, making the bound fully computable from observable quantities.

For estimation, we consider two complementary strategies: the recursive rollout estimator used in regulatory practice \citep{BoardOfGovernors2023,covas2015topdown}, which feeds each period's projected outcome into the next period's prediction, and a direct multi-horizon estimator that predicts each horizon's outcome in a single step. We establish non-asymptotic error bounds for both. The recursive error compounds through a horizon-dependent amplification factor, which we use to derive a practical selection rule: recursive estimation is reliable when the factor is small, and direct estimation is preferable otherwise.

For inference, we adapt importance-weighted conformal prediction \citep{tibshirani2019covariate,barber2023conformal} to the stress-testing setting. Conformal prediction provides distribution-free, finite-sample coverage guarantees that hold regardless of the underlying model class, and the importance-weighted version corrects for the covariate shift between historical calibration data and the stress scenario. Combining the identification bound with the calibration band gives a confidence interval for the causal stress-period mean that decomposes into two independently interpretable layers---estimation uncertainty and identification uncertainty---each with its own formal guarantee.

Real data do not provide ground truth for counterfactual outcomes---we cannot observe what credit losses would have been under a macro scenario that did not materialize. We therefore validate the framework on semi-synthetic experiments that combine real U.S. unemployment paths (FRED UNRATE, 1990--2024) with synthetic credit outcomes generated from a known data-generating process (DGP). In the experiments, we consider both a linear DGP and a nonlinear DGP with a quadratic macro response and a threshold amplifier for more realistic dynamics. The framework is agnostic to the choice of learner: we use OLS for the linear DGP and a two-layer MLP for the nonlinear DGP, but any base predictor can be plugged into the conformal calibration and confounding envelope. 

Our experiments first show that a predictive model with high backtesting accuracy can be systematically far off from the true causal mean under stress, while its own reported uncertainty is orders of magnitude too small to reflect this error. We then show that only our full framework---weighted conformal bands with the confounding envelope---achieves nominal coverage across all stress horizons, outperforming OLS regression with standard errors, bootstrap, ensemble conformal prediction \citep{xu2021conformal}, and weighted conformal bands without the envelope. Finally, we also provide an example of calibrating the confounding factors strength from observed data, using it in computing stress testing.

We consider contributions of our paper are three-fold. To the stress testing literature, we show that the predictive--causal gap in regulatory stress tests is structural and invisible to standard validation, and provide a complete framework---identification, estimation, and inference---that delivers valid coverage of the causal estimand with a concrete calibration protocol using observable data. To the causal inference/ML literature, we formalize a setting with no control group and continuous treatment that falls outside standard frameworks, and show that stationarity enables temporal pooling to substitute for cross-sectional variation, yielding a closed-form partial identification bound that decomposes total uncertainty into orthogonal estimation and identification layers. Third, we provide a complete inference procedure (weighted conformal prediction + confounding envelope) that delivers valid coverage of the causal estimand, and demonstrate an empirical calibration protocol using observable financial conditions data.

%% ==================================================================
%% 2. RELATED WORK
%% ==================================================================

%% ==================================================================
%% 3. SETUP AND ESTIMANDS
%% ==================================================================
\section{Setup and Estimands}
\label{sec:setup}

Consider panel data with units $i = 1, \ldots, N$ observed over periods $t = 1, \ldots, T$. Let $Y_{i,t}$ denote the outcome (e.g., credit loss), $X_{i,t}$ observed covariates, and $A_t$ a macro variable common across units (e.g., unemployment rate). A stress scenario specifies a hypothetical path $a^S = (a^S_1, \ldots, a^S_H)$ for $A$ over a forecast horizon $h = 1, \ldots, H$, optionally contrasted against a baseline path $a^B$.

\begin{definition}[Predictive and causal estimands]
\label{def:estimands}
For a macro path $a = (a_1,\ldots,a_H)$, the \emph{predictive mean} is
$\mu_h(a) := \E[Y_{i,h} \mid A_{1:h} = a_{1:h}]$
and the \emph{causal (interventional) mean} is
$\mu^*_h(a) := \E[Y_{i,h}(a_{1:h})]$.
\end{definition}

The two estimands differ in what drives the macro path. Under intervention, $A$ is set externally; under observation, conditioning on $A_{1:h} = a$ also selects on whatever unobserved factors drove $A$ to that value. When unemployment rises to 10\%, a predictive model attributes all the resulting losses to unemployment, but part of those losses reflect the deteriorating financial conditions that caused unemployment to rise. The gap $\mu_h(a) - \mu^*_h(a)$ is precisely this confounding bias. The stress test asks for $\mu^*_h(a^S)$; industry practice returns $\mu_h(a^S)$, and the gap is precisely the confounding bias.

Throughout, we assume unit $i$'s history can be compressed into a finite-dimensional state---a minimal condition for a supervised model trained on historical data to generalize to a hypothetical macro path. Let $\mathcal{H}_{i,t} := \{Y_{i,s}, X_{i,s}, A_s\}_{s \le t}$ denote the history of unit $i$ up to time $t$, and $I_{i,t} = g(\mathcal{H}_{i,t})$ a finite-dimensional summary. Formally, we have the following two assumptions:

\begin{assumption}[Markov state]
\label{ass:state} $\E[Y_{i,t+1} \mid I_{i,t}, A_{t+1}] = \E[Y_{i,t+1} \mid \mathcal{H}_{i,t}, A_{t+1}].$
\end{assumption}

\begin{assumption}[Stationarity]
\label{ass:stationary}
The conditional law $\cL(Y_{i,t+1} \mid I_{i,t}, A_{t+1})$ does not depend on $t$.
\end{assumption}

Under Assumptions~\ref{ass:state}--\ref{ass:stationary}, the predictive mean $\mu_h(a)$ is estimable from pre-stress data by iterated conditional expectations---a standard prediction problem requiring no causal assumption. Whether it equals $\mu^*_h(a)$ is the question the rest of the paper addresses.

%% ==================================================================
%% 4. IDENTIFICATION
%% ==================================================================
\section{Causal Set Identification under Structural Confounding}
\label{sec:id}

The core economic concern is that unobserved factors---such as financial conditions, credit market tightness, or investor sentiment---simultaneously drive both the macro variable and the outcome. When financial conditions deteriorate, unemployment rises and credit losses increase, but the losses are partly caused by the financial conditions themselves, not by unemployment alone. We model this through a single latent process $U_t$ that enters both the outcome and the macro variable additively. Formally,

\begin{assumption}[Latent confounding structure]
\label{ass:confounding}
The unobserved stationary process $\{U_t\}$ satisfies:
\begin{enumerate}[label=(\alph*), nosep]
    \item Additive confounding in outcome:
    $Y_{i,t+1} = m(I_{i,t}, A_{t+1}) + \gamma_Y U_t + \xi_{i,t+1}$,
    where $\xi_{i,t+1} \perp U_t \mid I_{i,t}, A_{t+1}$.
    \item Additive confounding in macro variable:
    $A_{t+1} = f(A_t) + \gamma_A U_t + \eta_{t+1}$,
    where $\eta_{t+1} \perp U_t$.
    \item Stationary dynamics: $\{U_t\}$ is stationary with $\E[U_t] = 0$, $\Var(U_t) = \sigma^2_U$, and autocovariance $\Cov(U_t, U_{t+k})$ exists.
    \item Independence of innovations: $\eta_{t+1} \perp \xi_{i,t+1}$.
\end{enumerate}
\end{assumption}

\begin{assumption}[Sensitivity bounds]
\label{ass:sensitivity}
$|\gamma_A| \le \bar{\gamma}_A$ and $|\gamma_Y| \le \bar{\gamma}_Y$, chosen by the analyst or calibrated from auxiliary data.
\end{assumption}

The additive confounding structure follows the sensitivity analysis literature \citep{carnegie2016sensitivity,franks2023sensitivity}. The outcome model $m$ may be arbitrarily nonlinear in observables; the additive structure restricts only how the unobserved $U_t$ enters. Exogeneity ($\gamma_A=0$ or $\gamma_Y=0$) would close the gap entirely but is implausible. Under these assumptions, we can bound the gap between $\mu_h(a)$ and $\mu^*_h(a)$:

\begin{theorem}[Partial identification]
\label{thm:partial-id}
Under Assumptions~\ref{ass:state}--\ref{ass:sensitivity}, for any macro path $a$, $\mu^*_h(a) \in [\mu_h(a) - c_h,\;\mu_h(a) + c_h]$, where
\[
c_h = \Big| \gamma_Y \sum_{j=1}^{h} \E[U_{j-1} \mid A_{1:j}=a_{1:j}] - \sum_{j=2}^{h}\big(\E[m(I^*_{i,j-1},a_j)] - \E[m(I_{i,j-1},a_j) \mid A_{1:j}=a_{1:j}]\big) \Big|,
\]
with $I^*_{i,t}$ the state under the causal distribution and $I_{i,t}$ under the predictive distribution.
\end{theorem}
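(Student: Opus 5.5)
The plan is to show that $c_h$ is exactly $|\mu_h(a)-\mu^*_h(a)|$. The interval then follows trivially, since $\mu^*_h(a)=\mu_h(a)-(\mu_h(a)-\mu^*_h(a))$ and the bracketed difference has absolute value $c_h$. So the work is an exact decomposition of the gap.

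The decomposition comes from unrolling the outcome equation of Assumption~\ref{ass:confounding}(a) one period at a time and comparing the observational and interventional regimes.

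\textbf{Observational regime.} At horizon $h$, I take conditional expectations given $A_{1:h}=a_{1:h}$ and get
\[
\mu_h(a)=\E[m(I_{i,h-1},a_h)\mid A_{1:h}=a_{1:h}]+\gamma_Y\E[U_{h-1}\mid A_{1:h}=a_{1:h}]+\E[\xi_{i,h}\mid A_{1:h}=a_{1:h}].
\]
The $\xi$ term should vanish: $\xi$ is independent of $U$ given $(I,A)$ and independent of the macro innovations $\eta$ by part (d). I will normalize $\E[\xi\mid I,A]=0$, which is implicit since $m$ absorbs the conditional mean. The Markov assumption (Assumption~\ref{ass:state}) and stationarity (Assumption~\ref{ass:stationary}) justify using the same $m$ at every $t$.

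\textbf{Interventional regime.} Setting $A_{1:h}=a_{1:h}$ externally cuts the arrow $U\to A$ in part (b), so $U_{h-1}$ keeps its marginal law with mean zero. This gives
\[
\mu^*_h(a)=\E[m(I^*_{i,h-1},a_h)]+0+0 .
\]

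\textbf{Gap.} Subtracting, the horizon-$h$ difference is a direct confounding term $\gamma_Y\E[U_{h-1}\mid a_{1:h}]$ plus a state-distribution mismatch $\E[m(I_{i,h-1},a_h)\mid a_{1:h}]-\E[m(I^*_{i,h-1},a_h)]$. The mismatch arises because lagged outcomes inside the state carry earlier confounding. Recursing on $h$ (telescoping from $j=1$, where $I_0$ is pre-sample and common to both regimes, so there is no $m$-mismatch term at $j=1$) should accumulate the sum over $j$ of the direct $U_{j-1}$ terms and the state-mismatch terms for $j\ge2$. Matching signs then gives exactly the expression inside the absolute value in $c_h$.

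\textbf{Main obstacle.} The hard step is making the telescoping rigorous and reconciling it with the statement's form. The stated $c_h$ sums all $\gamma_Y\E[U_{j-1}\mid\cdot]$ terms and all $m$-mismatch terms, whereas a naive single-step subtraction gives only the last of each. To produce the cumulative form I will read $\mu_h$ through the iterated-conditional-expectation rollout described after Assumption~\ref{ass:stationary}. In that reading each step's predictive mean feeds the next state, so the horizon-$h$ error is the sum of per-step errors. I will induct on $h$: assume the gap at $h-1$ equals the partial sum, then add the step-$h$ contributions. The delicate point is conditioning on $A_{1:j}$ versus $A_{1:h}$ in the $U$ terms. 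I expect to need the argument that $U_{j-1}$ and later $A$ interact only through part (b), and that the stated expression conditions on $A_{1:j}$. If exact equality fails, the fallback is to prove $|\mu_h-\mu^*_h|\le c_h$ via the triangle inequality on the inductive decomposition.
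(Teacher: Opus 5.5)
The step you flag as the main obstacle cannot be carried out, because the per-step identities do not telescope into the horizon-$h$ gap. Write $b_j:=\E[U_{j-1}\mid A_{1:j}=a_{1:j}]$ and $D_j:=\E[m(I^*_{i,j-1},a_j)]-\E[m(I_{i,j-1},a_j)\mid A_{1:j}=a_{1:j}]$, with $D_1=0$. Your single-step subtraction is already exact: $\mu^*_h(a)-\mu_h(a)=D_h-\gamma_Y b_h$ for every $h$. Moreover, $D_h$ already carries all earlier confounding $U_0,\ldots,U_{h-2}$, since that is how $I_{i,h-1}$ and $I^*_{i,h-1}$ come to differ. Summing these identities over $j\le h$ gives
\[
\gamma_Y\sum_{j=1}^{h} b_j-\sum_{j=2}^{h} D_j=\sum_{j=1}^{h}\big(\mu_j(a)-\mu^*_j(a)\big),
\]
so the stated $c_h$ is the modulus of the \emph{cumulative} gap through $h$, not of the gap at $h$. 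An induction of the form ``gap at $h$ equals gap at $h-1$ plus step-$h$ terms'' therefore double counts. Reading $\mu_h$ through the iterated-expectation rollout does not rescue it. That rollout computes the same $\E[Y_{i,h}\mid A_{1:h}=a_{1:h}]$, and earlier discrepancies reach horizon $h$ only through $m$, not additively; this is what $D_h$ measures, with weight $\beta^{h-j}$ in the affine case. The triangle-inequality fallback fails too: $c_h$ is the modulus of a signed sum, so the surplus terms can cancel the true gap instead of dominating it.

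For a concrete failure, take the setting of Corollary~\ref{cor:ar1} with $\beta=0$ and $g$ free of $X$. Then $D_j\equiv 0$ and $\mu^*_2(a)-\mu_2(a)=-\gamma_Y b_2$, while the stated $c_2=|\gamma_Y|\,|b_1+b_2|$. When $\gamma_A\neq 0$, you can choose $a_1$ with $b_1\neq 0$, and then $a_2$ with $b_2=-b_1$; in the Gaussian model $b_2$ is affine in $a_2$ with slope proportional to $\gamma_A$. Then $c_2=0$, although $|\mu^*_2(a)-\mu_2(a)|=|\gamma_Y b_1|>0$ for $\gamma_Y\neq 0$. So the statement as written does not hold in general. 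Corollary~\ref{cor:ar1}'s own formula gives $|\gamma_Y b_2|$ here, matching your single-step identity rather than the theorem. For general $\beta$, the theorem's $c_2$ differs from the corollary's by a stray $\gamma_Y b_1$ inside the modulus.

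The paper's proof has the same hole. Its $h=2$ step derives $\mu^*_2(a)-\mu_2(a)=D_2-\gamma_Y b_2$, and its general-$h$ step then asserts $\mu^*_h(a)-\mu_h(a)=\sum_j D_j-\gamma_Y\sum_j b_j$ without argument. Your route coincides with the paper's, and neither proves the statement as written. What your argument does establish is the interval with $c_h$ replaced by $|\gamma_Y b_h-D_h|$. Alternatively, the stated $c_h$ equals $\big|\sum_{j\le h}(\mu^*_j(a)-\mu_j(a))\big|$, so it is the right envelope for the cumulative mean $\sum_{j\le h}\mu^*_j(a)$, not for $\mu^*_h(a)$.
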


The bound $c_h$ has two components. The first sum is the direct channel: at each step $j$, the confounder $U_{j-1}$ enters the outcome with coefficient $\gamma_Y$, and conditioning on $A_{1:j} = a_{1:j}$ shifts our expectation of $U_{j-1}$ away from zero. The magnitude of each shift depends on how strongly $U$ drives the macro variable ($\gamma_A$) and how extreme the stress path is. The second sum is the state propagation channel: earlier shifts of $U$ contaminate past outcomes $Y_{i,1}, \ldots, Y_{i,j-1}$, which feed into the state $I_{i,j-1}$ and propagate through $m$ into future periods. 

The state propagation channel vanishes when $m$ is affine in past outcomes; if the confounder additionally follows a Gaussian AR(1), the conditional expectations admit a closed form, as stated in the following corollary.

\begin{corollary}[Gaussian AR(1) with affine state]
\label{cor:ar1}
Add to Assumptions~\ref{ass:state}--\ref{ass:sensitivity}: (i) $U_{t+1}=\phi_U U_t + \nu_{t+1}$, $\nu_t \sim \cN(0,\sigma^2_\nu)$; (ii) $\eta_t \sim \cN(0,\sigma^2_\eta)$; (iii) $m(I_{i,t},A_{t+1}) = \beta Y_{i,t} + g(X_{i,t},A_{t+1})$ with $|\beta|<1$. Then
\[
c_h = |\gamma_Y| \cdot \Big|\textstyle\sum_{j=0}^{h-1} \beta^{h-1-j}\,\E[U_j \mid A_{1:h}=a_{1:h}]\Big|,
\]
where $\E[\mathbf{U}\mid \mathbf{r}] = \gamma_A \Sigma_{UU}(\gamma_A^2\Sigma_{UU} + \sigma_\eta^2 I)^{-1}\mathbf{r}$, with $r_j := a_j - \E[A_j \mid A_{j-1}=a_{j-1}]$ and $[\Sigma_{UU}]_{jk} = \sigma_U^2 \phi_U^{|j-k|}$.
\end{corollary}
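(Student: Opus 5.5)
Under the affine state specification (iii), the plan is to compute the predictive–causal gap $\mu_h(a)-\mu^*_h(a)$ directly from a linear recursion, and then evaluate the conditional means of the confounder with Gaussian conditioning.

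\textbf{Step 1: unroll the outcome recursion.} By Assumption~\ref{ass:confounding}(a) and (iii), $Y_{i,t+1}=\beta Y_{i,t}+g(X_{i,t},A_{t+1})+\gamma_Y U_t+\xi_{i,t+1}$. Iterating from a common initial condition gives
\[
Y_{i,h}=\beta^{h}Y_{i,0}+\sum_{j=0}^{h-1}\beta^{h-1-j}\big(g(X_{i,j},A_{j+1})+\gamma_Y U_j+\xi_{i,j+1}\big).
\]
Under intervention, $A_{1:h}$ is fixed at $a_{1:h}$ and $U$ is left at its marginal law, so $\E[U_j]=0$. Under observation, we condition on $A_{1:h}=a_{1:h}$.

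\textbf{Step 2: take the difference.} I will argue that the $\xi$ terms and the $g$ terms contribute identically in the two distributions. This uses Assumption~\ref{ass:confounding}(a),(d) and the premise implicit in Theorem~\ref{thm:partial-id} that covariates are not shifted by $U$ other than through $Y$. Since $m$ is affine in past outcomes, the state-propagation term of Theorem~\ref{thm:partial-id} is absorbed exactly into the powers of $\beta$. We then get
\[
\mu_h(a)-\mu^*_h(a)=\gamma_Y\sum_{j=0}^{h-1}\beta^{h-1-j}\,\E[U_j\mid A_{1:h}=a_{1:h}].
\]
Taking absolute values yields the stated $c_h$, with the interval being symmetric about $\mu_h(a)$.

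A further point needs care. Each $U_j$ is conditioned on the whole path $A_{1:h}$, not only on $A_{1:j+1}$. Future macro values carry information about $U_j$ through the AR(1) dependence. This is why the corollary conditions on the full path, unlike the theorem's per-step conditioning.

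\textbf{Step 3: Gaussian conditioning (main obstacle).} Define $r_j=a_j-f(a_{j-1})=\gamma_A U_{j-1}+\eta_j$. The map $A_{1:h}\mapsto r_{1:h}$ is a bijection given $A_0$, so conditioning on $A_{1:h}$ is the same as conditioning on $\mathbf{r}$. Here I identify $\E[A_j\mid A_{j-1}]$ with $f(a_{j-1})$, which holds because $\E[U_{j-1}]=0$ and $\eta\perp U$ as the definition of $r_j$ intends. Under (i)–(ii), with $\nu$ and $\eta$ jointly Gaussian and independent, the vector $(\mathbf{U},\mathbf{r})$ is jointly Gaussian with zero mean. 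Its blocks are $\Cov(\mathbf{U},\mathbf{r})=\gamma_A\Sigma_{UU}$ and $\Var(\mathbf{r})=\gamma_A^2\Sigma_{UU}+\sigma_\eta^2 I$. The stationary AR(1) covariance is $[\Sigma_{UU}]_{jk}=\sigma_U^2\phi_U^{|j-k|}$, with $\sigma_U^2=\sigma_\nu^2/(1-\phi_U^2)$. The standard Gaussian conditioning formula then gives $\E[\mathbf{U}\mid\mathbf{r}]=\gamma_A\Sigma_{UU}(\gamma_A^2\Sigma_{UU}+\sigma_\eta^2I)^{-1}\mathbf{r}$.

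The delicate point is the index bookkeeping: $r_j$ loads on $U_{j-1}$, so $\mathbf{U}=(U_0,\dots,U_{h-1})$ aligns with $(r_1,\dots,r_h)$. I will also check that $\eta$ is independent across time and of $U$, which justifies the $\sigma_\eta^2 I$ block.
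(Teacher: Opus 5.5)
Your proposal follows the paper's proof essentially step for step. You unroll the affine recursion from a common initial state, cancel the $Y_{i,0}$, $g$ and $\xi$ terms between the interventional and observational means to get the gap $\gamma_Y\sum_{j=0}^{h-1}\beta^{h-1-j}\E[U_j\mid A_{1:h}=a_{1:h}]$, and then apply Gaussian conditioning with $\Cov(\mathbf U,\mathbf r)=\gamma_A\Sigma_{UU}$ and $\Var(\mathbf r)=\gamma_A^2\Sigma_{UU}+\sigma_\eta^2 I$.

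One small correction: $\E[A_j\mid A_{j-1}=a_{j-1}]=f(a_{j-1})$ does not follow from $\E[U_{j-1}]=0$. In general $\E[U_{j-1}\mid A_{j-1}]\neq 0$, because $U_{j-1}$ and $A_{j-1}$ share the factor $U_{j-2}$ when $\phi_U\gamma_A\neq 0$. You should instead simply take $r_j:=a_j-f(a_{j-1})$ as the structural residual, which is what the paper's proof uses implicitly. The paper also relies, without stating it, on the independence of $\eta$ across time and from the whole $U$ process; that is an assumption you must add, not something you can check from the stated hypotheses.
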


The bound is a weighted sum of inferred confounder values. Each $\E[U_j \mid A_{1:h}=a_{1:h}]$ answers: given the macro path $a$, how much is attributable to $U$ versus idiosyncratic noise? More extreme macro residuals produce larger inferred $U_j$, so $c_h$ grows with stress severity. The weights $\beta^{h-1-j}$ reflect state propagation. When $\gamma_A=0$, the macro path carries no information about $U$ and $c_h=0$. Proof in Appendix~\ref{app:proof-ar1}.

%% ==================================================================
%% 5. ESTIMATION
%% ==================================================================

%% ==================================================================
%% 6. INFERENCE
%% ==================================================================
\section{Inference: Confidence Intervals for the Causal Estimand}
\label{sec:inf}

Our identification strategy reduces the causal estimand $\mu^*_h(a)$ to its observational counterpart $\mu_h(a)$ plus a sensitivity term $c_h$ that depends only on maintained assumptions about confounding strength. Estimating the identified set therefore reduces to estimating its center, $\mu_h(a)$. We consider two strategies. The first is recursive rollout: train a one-step predictor $\hatm$ to predict $Y_{i,t+1}$ from $(I_{i,t}, A_{t+1})$ using pre-stress data, then apply it recursively---at each step, the predicted outcome replaces the unobserved true outcome in the state, and the updated state serves as input to the next prediction (Algorithm~\ref{alg:rollout}). 

The second is direct multi-horizon estimation: for each $h$, define $m_h(\iota, a_{1:h}) := \E[Y_{i,h} \mid I_{i,0} = \iota, A_{1:h} = a_{1:h}]$ and train $\hat{m}_h$ by pooling samples across rolling origins. The two strategies trade off differently: recursive trains a single model but its error compounds with horizon; direct avoids compounding but requires a separate model per horizon. Current regulatory practice (CCAR/DFAST) is recursive. Formal error bounds for both, including a selection rule based on the amplification rate, are given in Appendix~\ref{app:proofs-est}.

Either strategy yields a point estimate $\hatmu_h(a)$ of the predictive mean $\mu_h(a)$. The error bounds in Appendix~\ref{app:proofs-est} guarantee that estimation error is finite and independent of the confounding parameters, but depend on worst-case quantities (e.g., uniform one-step error $\epsilon_n$) that are too conservative for practical use. We instead quantify estimation uncertainty from data using conformal prediction \citep{vovk2005algorithmic}, which is distribution-free, compatible with any base learner, and yields substantially tighter intervals.

\subsection{Weighted Conformal Calibration}
\label{sec:inf-est}

The idea of conformal prediction is simple: if we know how large the estimator's prediction errors were on historical data, we can use that to bound how large the error will be on the stress scenario. The complication is that historical macro paths differ from the stress path, so we importance-weight the historical errors to reflect what they would look like under stress conditions.

Fix a target horizon $h$.  Split the pre-stress data into a training segment (used to fit $\hat{m}$) and a held-out calibration segment.  Within the calibration segment, select $B$ rolling origins spaced $g$ periods apart;\footnote{The gap $g$ balances two considerations: larger $g$ reduces temporal dependence between consecutive scores, but yields fewer calibration points $B \approx \lfloor(T_{\mathrm{cal}} - h)/g\rfloor$.} each origin $b$ must have at least $h$ subsequent periods of observed data.  At each origin $b$, run the rollout $h$ steps forward along the realized macro path and record the prediction error
\[
S_b := \big|N^{-1}\textstyle\sum_i \big(\hat{Y}_{i,h}(b) - Y_{i,h}(b)\big)\big|,
\]
where $\hat{Y}_{i,h}(b)$ is the $h$-step rollout prediction from origin $b$ and $Y_{i,h}(b)$ is the corresponding realized outcome. Each calibration score $S_b$ is computed under the historical macro path from origin $b$, but the quantity of interest is the prediction error under the stress path $a^S = (a^S_1, \ldots, a^S_H)$.  To bridge this distribution shift, assign each score an importance weight
\[
w_b(a^S) := \prod_{j=1}^{h} \frac{p(a^S_j \mid J_{b,j-1})}{p(a_{b,j} \mid J_{b,j-1})},
\]
where $a_{b,j}$ is the realized macro value at the $j$-th step from origin $b$ and $J_{b,j}$ is the aggregate macro state at that point.  The conditional density $p(\cdot \mid J_{b,j-1})$ is estimated from pre-stress data using any consistent density model for $A_{t+1}$ given the macro state.

The weighted scores $\{(S_b, w_b)\}_{b=1}^B$ define an empirical distribution of prediction errors reweighted to approximate stress conditions.  The estimation uncertainty band is the weighted conformal quantile
\[
q^w_{1-\alpha} := \inf\Big\{q : \frac{\sum_b w_b \mathbf{1}\{S_b \le q\} + W_{\max}}{\sum_b w_b + W_{\max}} \ge 1 - \alpha\Big\},
\]
where $W_{\max} := \max_b w_b$ which acts as a conservative correction: it accounts for the possibility that the stress period's true prediction error exceeds all observed historical errors, ensuring the coverage guarantee holds even in this worst case.

%% --- Part 3: Guarantee ---
Standard conformal prediction guarantees exact $1-\alpha$ coverage under exchangeability, which our setting violates in two ways: calibration scores are temporally dependent, and the stress macro path differs from the historical paths used for calibration. Importance weighting \citep{tibshirani2019conformal} and spacing out rolling origins \citep{barber2023conformal} address these violations, but neither correction is exact, so the coverage guarantee degrades by two quantifiable terms: $\Rmix$ for residual temporal dependence and $\Rweight$ for residual extrapolation cost. The formal guarantee requires three additional assumptions:

\begin{assumption}[Decaying temporal dependence]
\label{ass:mixing}
The joint process $\{(Y_{i,t}, X_{i,t}, A_t)\}_t$ is $\beta$-mixing with mixing coefficients $\beta(k) \le C_\beta e^{-\lambda k}$.
\end{assumption}

\begin{assumption}[Bounded likelihood ratio]
\label{ass:bounded-lr}
$w_b(a^S) \le W_{\max} < \infty$ for all $b$.
\end{assumption}

\begin{assumption}[Score path-homogeneity]
\label{ass:homogeneity}
The distribution of the rollout error score, conditional on $J_{b,j}$, depends on the macro path only through the likelihood ratio.
\end{assumption}

Assumption~\ref{ass:mixing} requires that temporal dependence decays exponentially, which is standard for macroeconomic data.  Assumption~\ref{ass:bounded-lr} ensures the stress path is not so extreme that no historical period receives meaningful weight---the formal version of the regulatory requirement that stress scenarios be ``severe but plausible.''  Assumption~\ref{ass:homogeneity} requires that the model's prediction error behavior is stable across macro environments after conditioning on the aggregate state, so that importance weighting correctly transfers calibration information from historical to stress periods.

\begin{proposition}[Weighted calibration coverage]
\label{prop:weighted-cal}
Under Assumptions~\ref{ass:mixing}--\ref{ass:homogeneity}:
\begin{equation}
\label{eq:cal-coverage}
P\!\big(\mu_h(a) \in [\hat{\mu}_h(a) \pm q^w_{1-\alpha}]\big) \ge 1 - \alpha - \Rmix - \Rweight, \nonumber
\end{equation}
where $\Rmix = 2(B+1)\beta(g)$ captures temporal dependence ($g$ is the gap between consecutive rolling origins) and $\Rweight = W_{\max}/(\sum_b w_b + W_{\max})$ captures extrapolation cost.
\end{proposition}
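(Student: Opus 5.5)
We will prove the bound by the standard three-step route for weighted conformal prediction under dependence. We (i) replace the dependent calibration sequence by an independent surrogate, paying $\Rmix$. We (ii) apply the weighted-exchangeability argument of \citet{tibshirani2019covariate} to the surrogate, with the stress-path score as the $(B+1)$-st point. We (iii) account for the fact that the stress-path weight is replaced by the conservative constant $W_{\max}$, paying $\Rweight$.

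Throughout, let $S_{B+1} := |\hat{\mu}_h(a) - \mu_h(a)|$ denote the test score under the stress path. The coverage event is then exactly $\{S_{B+1} \le q^w_{1-\alpha}\}$, and we condition on the training segment so that $\hat m$ is fixed.

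\textbf{Step 1: decoupling.} Rolling origins spaced $g$ apart yield $B$ blocks of length at least $h$, separated by gaps of length $g$, plus the stress block. By Assumption~\ref{ass:mixing} and Berbee's coupling lemma, there exist blocks $\tilde S_1,\dots,\tilde S_{B+1}$ such that:
\begin{enumerate}[label=(\roman*), nosep]
\item each $\tilde S_b$ has the same marginal law as $S_b$;
\item the $\tilde S_b$ are mutually independent;
\item $P(\tilde S_b \neq S_b) \le 2\beta(g)$ for each $b$ (the factor $2$ covering dependence on both neighbours).
\end{enumerate}
A union bound over the $B+1$ blocks gives $|P(E) - P(\tilde E)| \le 2(B+1)\beta(g) = \Rmix$ for any event $E$ of the scores, in particular the coverage event.

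\textbf{Step 2: weighted exchangeability.} The independent surrogates are not identically distributed, since each is computed under a different macro path. By Assumption~\ref{ass:homogeneity}, the conditional law of the score given the aggregate state $J_{b,j}$ depends on the path only through the likelihood ratio. Hence the law of $\tilde S_b$ relates to the law of the stress score by the density ratio $w_b(a^S)$, the product of one-step conditional density ratios along the path. This puts us in the covariate-shift framework of \citet{tibshirani2019covariate}.

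Conditional on the multiset of scores, the test point is $\tilde S_k$ with probability proportional to its weight. Taking the weight of the $(B+1)$-st point to be $w_{B+1}$, this gives
\[
P\big(\tilde S_{B+1} \le \mathrm{Quantile}_{1-\alpha}(\textstyle\sum_{b\le B} p_b\delta_{\tilde S_b} + p_{B+1}\delta_{\infty})\big) \ge 1-\alpha,
\]
where $p_b = w_b / \sum_{k\le B+1} w_k$.

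\textbf{Step 3: the surrogate weight.} The paper's quantile uses $W_{\max}$ as the test weight, and Assumption~\ref{ass:bounded-lr} gives $w_{B+1}\le W_{\max}$. Replacing $w_{B+1}$ by the upper bound $W_{\max}$ in the $\delta_\infty$ term only enlarges the quantile, so it is conservative. However, the same substitution alters the normalisation of the finite-sample weights. We handle this by sandwiching, and we also use the resulting bound to absorb the slack from not knowing the exact test weight.

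The weighted rank statistic is discrete. The worst-case overshoot between the target level and the attained level, when the test point lands exactly at the quantile atom, is at most the mass placed on the test point, namely $W_{\max}/(\sum_b w_b + W_{\max}) = \Rweight$. This yields $P(\tilde E) \ge 1-\alpha-\Rweight$. Combining with Step 1 gives the stated bound.

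\textbf{Main obstacle.} We expect the main obstacle to be Step 2. The weights $w_b$ are path-product ratios rather than ratios of a single i.i.d. covariate density, and after coupling the points are independent but not exchangeable. The weighted-exchangeability lemma must therefore be re-derived for independent, non-identically distributed scores whose laws are tied through Assumption~\ref{ass:homogeneity}.

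Our first attempt will be to define a common reference measure $Q$, the law of the score under the stress path. We then write each $\tilde S_b$'s density as $dQ/w_b$, suitably normalized, so that the joint density of $(\tilde S_1,\dots,\tilde S_{B+1})$ is $\prod_b dQ \cdot \prod_b w_b^{-1}$ times a symmetric function. This recovers the permutation-weight formula of \citet{tibshirani2019covariate}. If that fails, we fall back to treating the estimated density model as exact and stating the argument conditionally on the realized weights.
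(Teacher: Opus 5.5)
Your architecture matches the paper's: Berbee coupling plus a union bound gives $\Rmix$, and Theorem~2 of \citet{tibshirani2019covariate} handles the shift. However, Step~3 misreads the quantile and so takes $\Rweight$ from the wrong place. In the paper's $q^w_{1-\alpha}$, $W_{\max}$ is added to the numerator for every $q$; it is not an atom at $+\infty$. The defining inequality is equivalent to $\sum_b w_b\mathbf{1}\{S_b\le q\}/(\sum_b w_b+W_{\max})\ge 1-\alpha-\Rweight$. So $q^w_{1-\alpha}$ is the ordinary weighted conformal quantile (test weight $W_{\max}$ at $+\infty$) taken at the lowered level $1-\alpha-\Rweight$. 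The weighted-exchangeability lower bound at that level is exactly the paper's $\alpha+\Rweight$. Your claim that the $W_{\max}$ substitution ``only enlarges the quantile'' holds under your reading, but not for the interval in the proposition, which is narrower than the level-$(1-\alpha)$ weighted interval. A discreteness overshoot also cannot supply the loss. The lower bound of \citet{tibshirani2019covariate} has no slack from ties; atoms only affect the matching upper bound. Under your own reading, monotonicity would give $\ge 1-\alpha$ outright, so the $\Rweight$ you subtract has no source in your argument.

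The real gap is Step~2, which you leave open and whose proposed repair fails. A scalar $w_b$ in ``$dQ/w_b$, suitably normalized'' simply normalizes away. If instead each coupled score gets its own weight function, the weighted-exchangeability lemma gives conditional probabilities that are sums over permutations, not $w_k/\sum_j w_j$. Your fallback of conditioning on the realized weights makes the scores non-identically distributed and recreates the problem. The obstacle comes from conditioning on the realized paths. The paper's appeal to Assumption~\ref{ass:homogeneity} amounts to taking the macro path (the states $J_{b,\cdot}$) as covariate and the score as response, $Z_b=(J_{b,\cdot},S_b)$. By stationarity, the Berbee-coupled calibration blocks are then i.i.d., which is what lets the paper call the coupled sequence exchangeable. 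Homogeneity says the conditional law of the score given the path is the same under calibration and stress, so the stress point differs only in the law of its path. That is exactly the covariate-shift setting, with the weight function evaluated at each calibration point's own path. The formula $p_k\propto w_k$ then follows directly from \citet{tibshirani2019covariate}, with no re-derivation needed.
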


Combining Proposition~\ref{prop:weighted-cal} (bounding $|\hatmu_h(a)-\mu_h(a)|$) with Theorem~\ref{thm:partial-id} (bounding $|\mu_h(a)-\mu^*_h(a)|$) by triangle inequality allows us to compute the bound empirically:

\begin{theorem}[CI for $\mu^*_h(a)$]
\label{thm:ci}
Under Assumptions~\ref{ass:state}--\ref{ass:homogeneity} and $|\mu^*_h(a)-\mu_h(a)|\le c_h$,
\[
\mathrm{CI}_h(a) := \big[\hatmu_h(a) \pm (q^w_{1-\alpha} + c_h)\big], \qquad \Pbb\big(\mu^*_h(a)\in\mathrm{CI}_h(a)\big)\ge 1-\alpha-\Rmix-\Rweight.
\]
\end{theorem}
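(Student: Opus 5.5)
The argument is a containment-of-events plus triangle-inequality step that combines Proposition~\ref{prop:weighted-cal} with the deterministic identification bound. The key observation is that the hypothesis $|\mu^*_h(a)-\mu_h(a)|\le c_h$ is not random. It holds by Theorem~\ref{thm:partial-id}, and it is also assumed directly in the statement. So all randomness sits in the estimation layer, namely in $\hatmu_h(a)$ and in $q^w_{1-\alpha}$, which are computed from the training and calibration segments. Because $c_h$ is a fixed number once the sensitivity parameters and the path $a$ are fixed, we never need to control any interaction between the two layers.

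First, define the calibration event $E := \{|\hatmu_h(a)-\mu_h(a)|\le q^w_{1-\alpha}\}$. This is exactly the event $\mu_h(a)\in[\hatmu_h(a)\pm q^w_{1-\alpha}]$ in Proposition~\ref{prop:weighted-cal}. Under Assumptions~\ref{ass:mixing}--\ref{ass:homogeneity}, which are included among the hypotheses of the theorem, that proposition gives $\Pbb(E)\ge 1-\alpha-\Rmix-\Rweight$.

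Second, show the deterministic inclusion $E\subseteq\{\mu^*_h(a)\in\mathrm{CI}_h(a)\}$. On $E$, the triangle inequality gives
\[
|\hatmu_h(a)-\mu^*_h(a)| \le |\hatmu_h(a)-\mu_h(a)| + |\mu_h(a)-\mu^*_h(a)| \le q^w_{1-\alpha} + c_h ,
\]
which is precisely the statement $\mu^*_h(a)\in[\hatmu_h(a)\pm(q^w_{1-\alpha}+c_h)]$. By monotonicity of probability, $\Pbb(\mu^*_h(a)\in \mathrm{CI}_h(a))\ge \Pbb(E)\ge 1-\alpha-\Rmix-\Rweight$.

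The only point needing care is a measurability and consistency check. We must confirm that $c_h$ used in the interval is the same nonrandom quantity that bounds the gap, so that it does not inherit randomness from the data. If $c_h$ were instead estimated, for example through plug-in conditional expectations from Corollary~\ref{cor:ar1} with estimated $\phi_U$ and $\sigma_\eta$, the inclusion would require an additional union bound. The statement sidesteps this by assuming the bound $c_h$ outright, so I will note that $c_h$ is treated as known given $(\bar\gamma_A,\bar\gamma_Y)$ and the path $a$, and conclude.
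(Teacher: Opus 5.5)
Your proof is correct and matches the paper's argument. On the event from Proposition~\ref{prop:weighted-cal} that $|\hatmu_h(a)-\mu_h(a)|\le q^w_{1-\alpha}$, which has probability at least $1-\alpha-\Rmix-\Rweight$, the triangle inequality with the maintained deterministic bound $|\mu^*_h(a)-\mu_h(a)|\le c_h$ places $\mu^*_h(a)$ in $\mathrm{CI}_h(a)$; your explicit event inclusion and your remark that $c_h$ must be nonrandom (otherwise an extra union bound is needed) are sound points the paper leaves implicit.
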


The CI decomposes into two layers: estimation ($q^w_{1-\alpha}$, data-driven, adapts to stress severity) and identification ($c_h$, analyst-specified via $(\bar{\gamma}_A,\bar{\gamma}_Y)$). Coverage is conditional on correct specification of $(\bar\gamma_A,\bar\gamma_Y)$---the standard sensitivity-analysis condition \citep{rosenbaum2002observational,cinelli2020sensemakr}. Proofs are in Appendices~\ref{app:proof-weighted-cal} and~\ref{app:proof-ci}.

%% ==================================================================
%% 7. EXPERIMENTS
%% ==================================================================
\section{Semi-Synthetic Experiments}
\label{sec:experiments}

The theory establishes an irreducible gap $c_h$ between the predictive mean $\mu_h(a)$ and the causal mean $\mu^*_h(a)$. The experiments ask three questions. First, does this gap matter in practice -- can a model that passes standard backtesting validation still be systematically wrong about the causal quantity? Second, does the framework close it -- does adding the confounding envelope $c_h$ to the weighted conformal band restore nominal coverage of $\mu^*_h$? Third, can practitioners anchor the sensitivity parameters $(\bar\gamma_A, \bar\gamma_Y)$ in observable data, or are they purely hypothetical?

We adopt a semi-synthetic design in order to evaluate results against known counterfactual ground truth, which is not available in real data. We summarize the key settings below; more details of experiments are in Appendix~\ref{app:experiment_details}. We build the macro variable $A_t$ using historical U.S.\ unemployment rates\footnote{Monthly civilian unemployment rate from FRED (series UNRATE), \url{https://fred.stlouisfed.org/series/UNRATE}, 1990--2024, $T=420$ months. We acknowledge that the observed unemployment rate is itself influenced by latent factors, but it provides a realistic approximation of macro dynamics for our simulation.}, while the confounder $U_t$ and outcomes $Y_{i,t}$ are synthetic with known parameters. This inherits real macro dynamics, including the 2001, 2008, and 2020 recessions. We assume the confounder $U_t$ follows an AR(1) with Gaussian innovations. The learner observes the confounded macro variable and outcomes but not the confounder itself. We consider two specifications for $m$ in DGPs:
\begin{align*}
    \text{Linear:} \quad
    m(Y_{i,t}, X_i, A_{t+1}) &= \alpha + \beta_1 Y_{i,t} + \beta_2 A_{t+1} + \beta_3 X_i, \\
    \text{Nonlinear:} \quad
    m(Y_{i,t}, X_i, A_{t+1}) &= \alpha + \beta_1 Y_{i,t} + \beta_2 A_{t+1}
        + \beta_4 A_{t+1}^2
        + \beta_5 \max(Y_{i,t} - \bar{Y},\, 0) + \beta_3 X_i.
\end{align*}
The linear model is the standard specification in regulatory stress testing. The nonlinear model adds a quadratic macro response and a threshold amplifier. The stress scenario is a flat unemployment path at approximately 9\%, comparable in severity to the 2008 financial crisis. We use OLS for the linear DGP and a two-layer MLP for the nonlinear DGP. The sample is divided into three non-overlapping windows. The model is trained on 1990--2014, conformal calibration scores are computed on 2015--2019 without updating model parameters, and the stress rollout is evaluated over a 12-month horizon from 2020.

\begin{table}[t]
\centering
\footnotesize
\setlength{\tabcolsep}{3pt}
\renewcommand{\arraystretch}{0.85}
\caption{DGP parameters. Left: outcome model. Right-top: confounder (synthetic). Right-bottom: macro (fitted from unemployment rate).}
\label{tab:dgp-params-app}
\begin{tabular}{@{}lll|lll@{}}
\toprule
Symbol & Value & Description & Symbol & Value & Description \\
\midrule
$\alpha$ & 1.0 & Intercept & $\phi_U$ & 0.85 & Confounder persistence \\
$\beta_1$ & 0.85 & Outcome persistence & $\sigma_\nu$ & 0.5 & Confounder innov.\ std \\
$\beta_2$ & 0.15 & Macro sensitivity & $(\gamma_A,\gamma_Y)$ & varies & Confounding strength \\
$\beta_3$ & 0.3 & Cross-sect.\ covariate & \multicolumn{3}{c}{} \\
$\beta_4$ & 0.02 & Quadratic (nonlin.) & $\hat\phi_A$ & 0.995 & AR(1) coeff.\ (fitted) \\
$\beta_5$ & 0.12 & Threshold (nonlin.) & $\hat\mu_A$ & 5.84 & Mean unemp.\ rate (\%) \\
$\bar{Y}$ & 18.0 & Distress thresh.\ (nonlin.) & $\hat\sigma_\eta$ & 0.154 & AR(1) innov.\ std \\
$\sigma_\xi$ & 0.15 & Idiosyncratic noise & $\hat\sigma_A$ & 1.59 & Marginal std (\%) \\
$N$ & 3{,}000 & Cross-sect.\ units & $a^S$ & 9.03 & Stress: $\hat\mu_A+2\hat\sigma_A$ (\%) \\
$H$ & 12 & Horizon (months) & & & \\
\bottomrule
\end{tabular}
\end{table}

\subsection{Does Confounding Cause Silent Failure?}
\label{sec:exp-q1}

We run three experiments to show that a model passing standard validation can be arbitrarily wrong about $\mu^*_h(a^S)$. The first computes $c_h$ analytically with fixed $\gamma_A$ and varying $\gamma_Y$, establishing that the gap is structural. The second computes the stress test outcome with and without the latent factor's expected contribution, showing the gap dwarfs visible estimation uncertainty. The third compares out-of-sample backtesting $R^2$ against $c_h$ across confounding strengths, demonstrating that standard validation metrics cannot detect the bias.

Figure~\ref{fig:q1} summarizes all three results. The top panel shows the confounding gap $c_h$ computed in closed form, with no model training: the gap grows linearly with confounding strength, reaching 1.23 loss units at $\gamma_Y = 1.0$. The middle panel shows that the two answers diverge immediately, yet the $\pm 2\,$SE band around the predictive mean is too narrow to see at this scale---the hidden bias exceeds visible uncertainty by 58--212$\times$ across horizons. The bottom panel shows that one-step backtesting $R^2$ remains flat at $\sim$0.957 regardless of confounding strength, while $c_{12}$ jumps from zero to 0.6---an analyst inspecting standard validation metrics cannot distinguish a confounded world from an unconfounded one. Together, the three experiments show that the confounding gap is structural, large relative to estimation uncertainty, and invisible to standard model validation---precisely the conditions under which silent failure occurs in practice.

\begin{figure}[H]
\centering
\includegraphics[width=\textwidth, height=2.8cm, keepaspectratio=false]{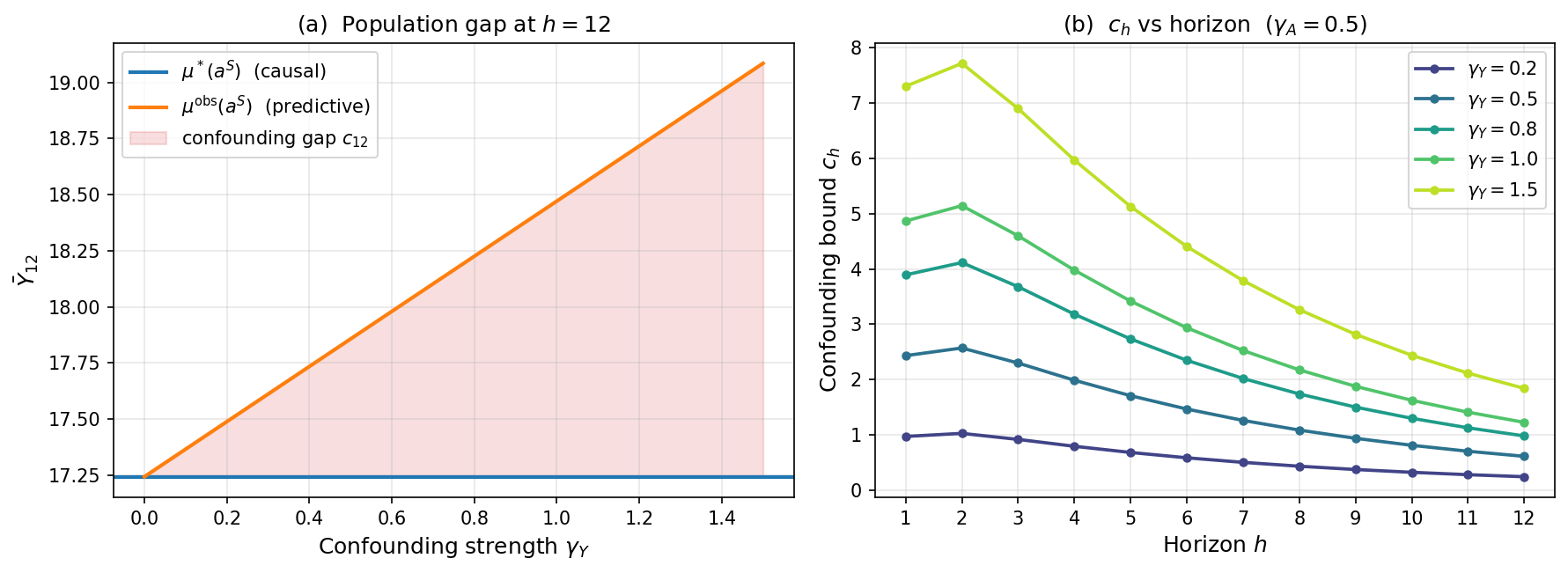}\\[1pt]
\includegraphics[width=\textwidth, height=2.8cm, keepaspectratio=false]{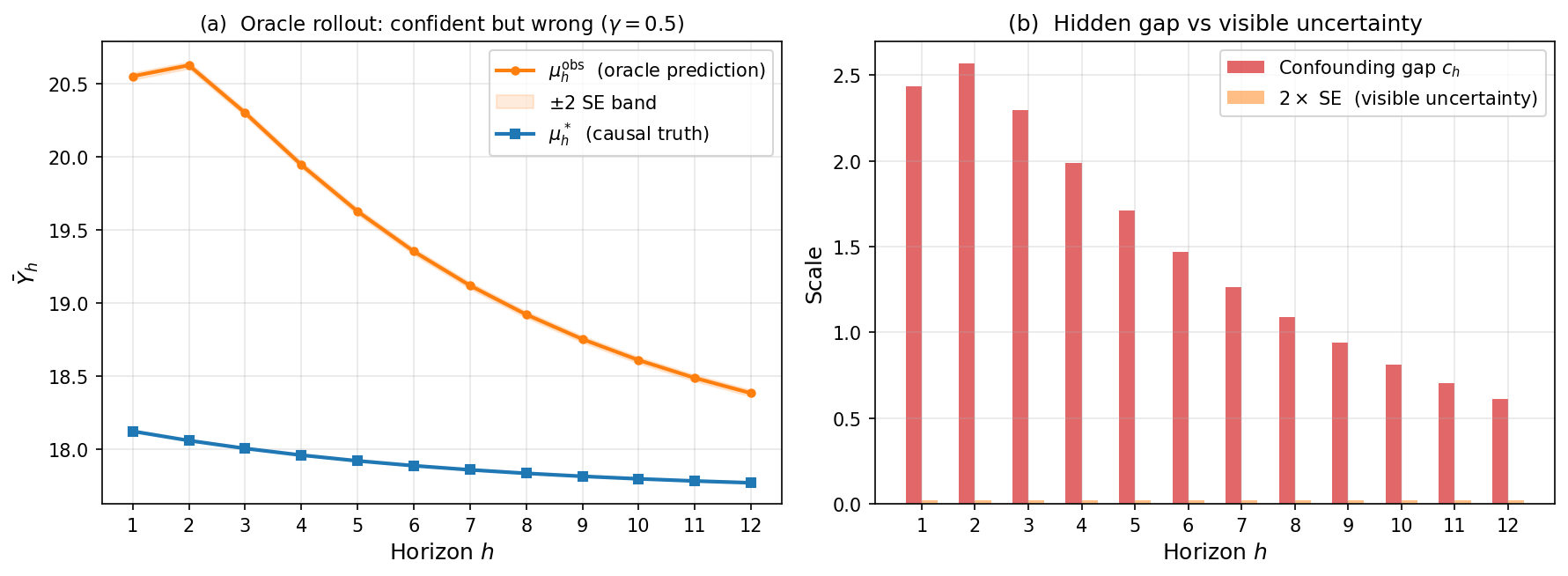}\\[1pt]
\includegraphics[width=\textwidth, height=2.8cm, keepaspectratio=false]{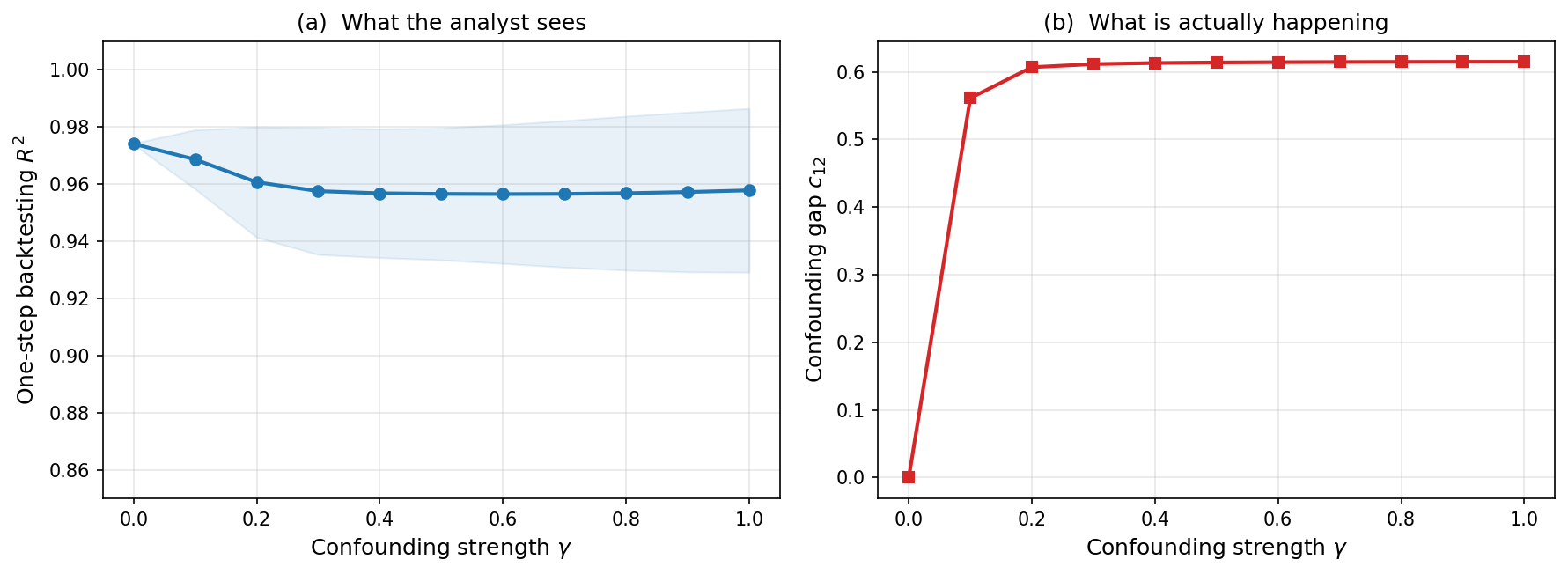}
\caption{Silent failure of predictive stress testing. Top: Population-level confounding bias ($\gamma_A=0.5$); $\mu^*_{12}$ is invariant to $\gamma_Y$ while $\mu_{12}$ rises linearly. Middle: Oracle rollout at $\gamma=0.5$; causal truth $\mu^*_h$ lies 58--212 SE outside the predictive band. Bottom: Backtesting $R^2$ is flat (0.974$\to$0.957) while $c_{12}$ jumps from 0 to 0.61.}
\label{fig:q1}
\end{figure}

\subsection{Does the Confounding Envelope Restore Coverage?}
\label{sec:exp-q2a}

We test whether our estimator can cover $\mu^*_h$ despite not modeling confounding. After training, we pick a starting point in 2015--2019 and let the model predict forward step by step using actual unemployment. At each step, we record the gap between the model's predicted portfolio loss and the actual portfolio loss. Twelve steps yield twelve errors, filed into separate bins for $h=1$ through $h=12$. We repeat from different starting points to accumulate a collection of historical errors for each horizon, then use each horizon's errors to calibrate the corresponding interval width.

We compare our methods with three benchmarks commonly used in the field. First, ordinary least squares, which constructs intervals from compounded regression standard errors across the rollout. Second, bootstrap, which resamples borrowers, retrains, and takes percentile intervals. Third, EnbPI \citep{XuXie2021}, which uses the per-horizon historical errors directly as interval widths. We also compare two versions of our method: one reweights those errors by importance weights to account for the distributional shift to the stress scenario, and the other adds the confounding envelope $c_h$ on top of the reweighted interval.

Figure~\ref{fig:q2a-heatmap} shows coverage at $\gamma=0.4$, nominal 90\%, 50 replications. On the linear DGP (left), three tiers emerge: OLS and bootstrap achieve 0--8\%, EnbPI and ours~(cal) reach 82--98\%, and only ours~(full) achieves 100\% across all 12 horizons. The decisive comparison is the bottom two rows: same conformal machinery, same weights---the jump from 82--98\% to 100\% is entirely from $c_h$. On the nonlinear DGP (right), baseline coverage at $\gamma=0$ is already 54--78\%, reflecting MLP extrapolation failure under stress---an estimation issue orthogonal to identification. Even so, ours~(full) adds 44 percentage points at $h=1$ (52\%$\to$96\%) and remains the only method with coverage $\ge 90\%$ at any horizon. The results confirm that the confounding envelope is necessary and sufficient for causal coverage when the base predictor is well-specified, and provides substantial gains even when it is not. Estimation failure and identification failure are independent---each requires its own remedy.

\begin{figure}[h]
\centering
\begin{minipage}{0.49\textwidth}
    \centering
    \includegraphics[width=\textwidth]{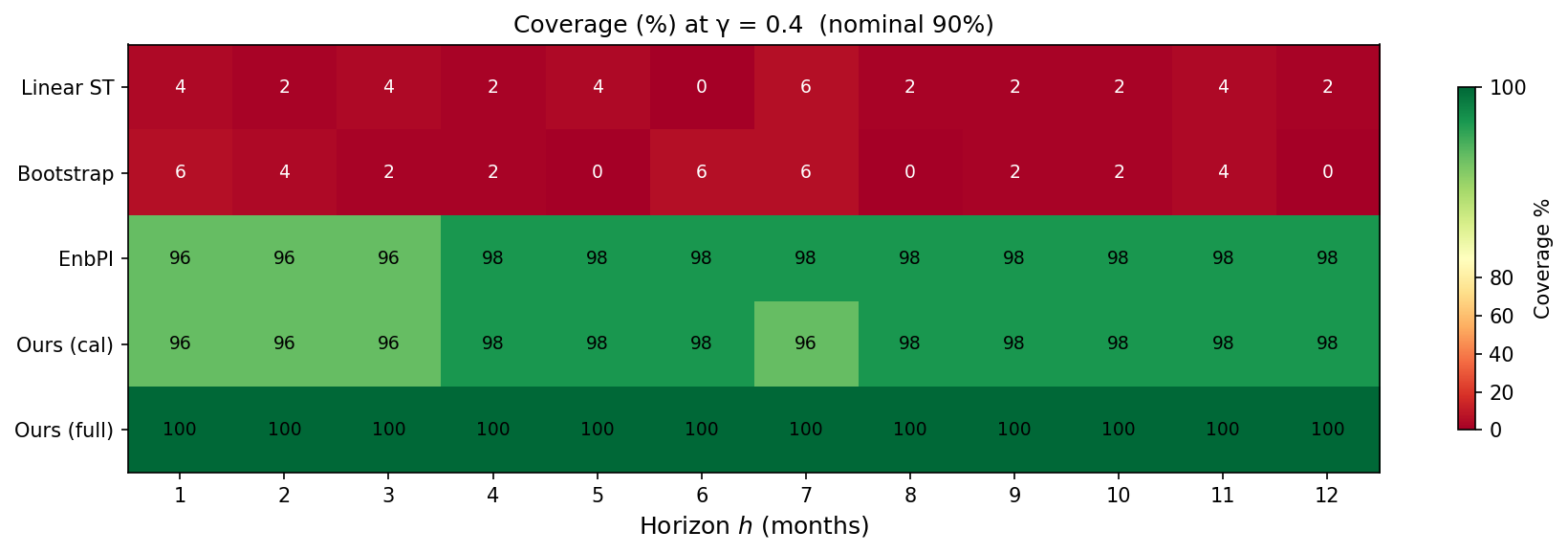}
\end{minipage}
\hfill
\begin{minipage}{0.49\textwidth}
    \centering
    \includegraphics[width=\textwidth]{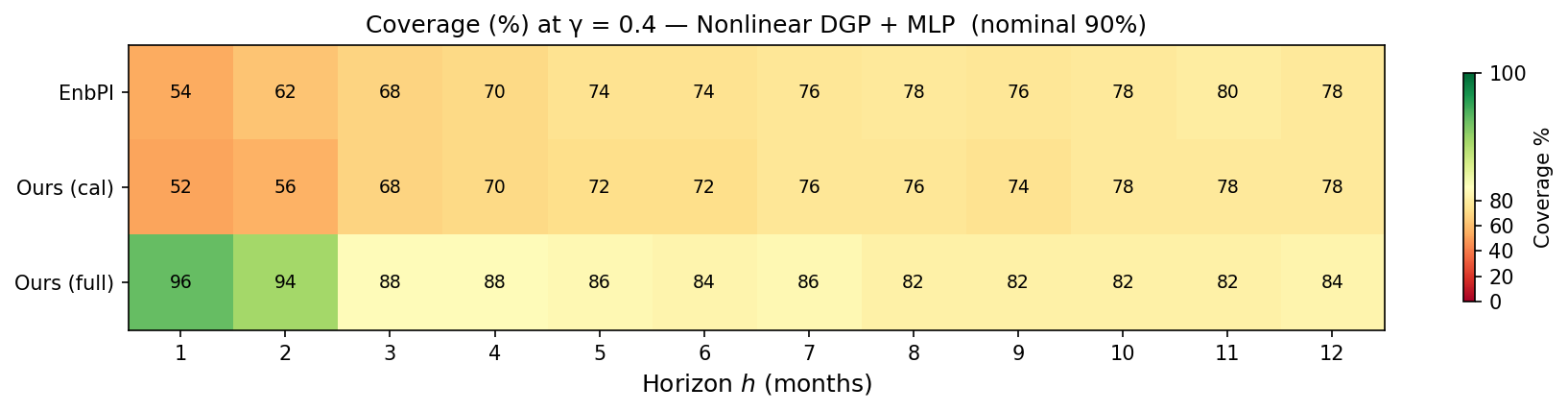}
\end{minipage}
\caption{Coverage heatmaps at $\gamma=0.4$, nominal 90\%.
Left: linear DGP + OLS (5 methods).
Right: nonlinear DGP + MLP (3 methods).
Ours~(full) is the only method achieving 100\% across
all horizons (linear) or $\geq 90\%$ at short horizons (nonlinear).}
\label{fig:q2a-heatmap}
\end{figure}

\subsection{Two-Layer Decomposition}
\label{sec:exp-decomposition}

The heatmaps show that the confounding envelope matters; Figure~\ref{fig:decomposition} shows how much. At each horizon we decompose the interval half-width into two components: the historical prediction error (from the calibration step) and the confounding correction $c_h$. In the linear DGP, $c_h$ accounts for 78\% of the total at $h=1$ and 15\% at $h=12$; in the nonlinear DGP, 67\% and 7.5\%. The $c_h$ bars are identical across panels ---the identification correction depends only on structural parameters, not the learner. At short horizons, correcting for confounding is the dominant concern; at long horizons, improving the base predictor matters more. This directly mirrors the three-layer decomposition in the theory: estimation and identification contribute independently, and the practitioner can diagnose which layer is binding at any given horizon.

\begin{figure}[H]
\centering
\includegraphics[width=0.75\textwidth]{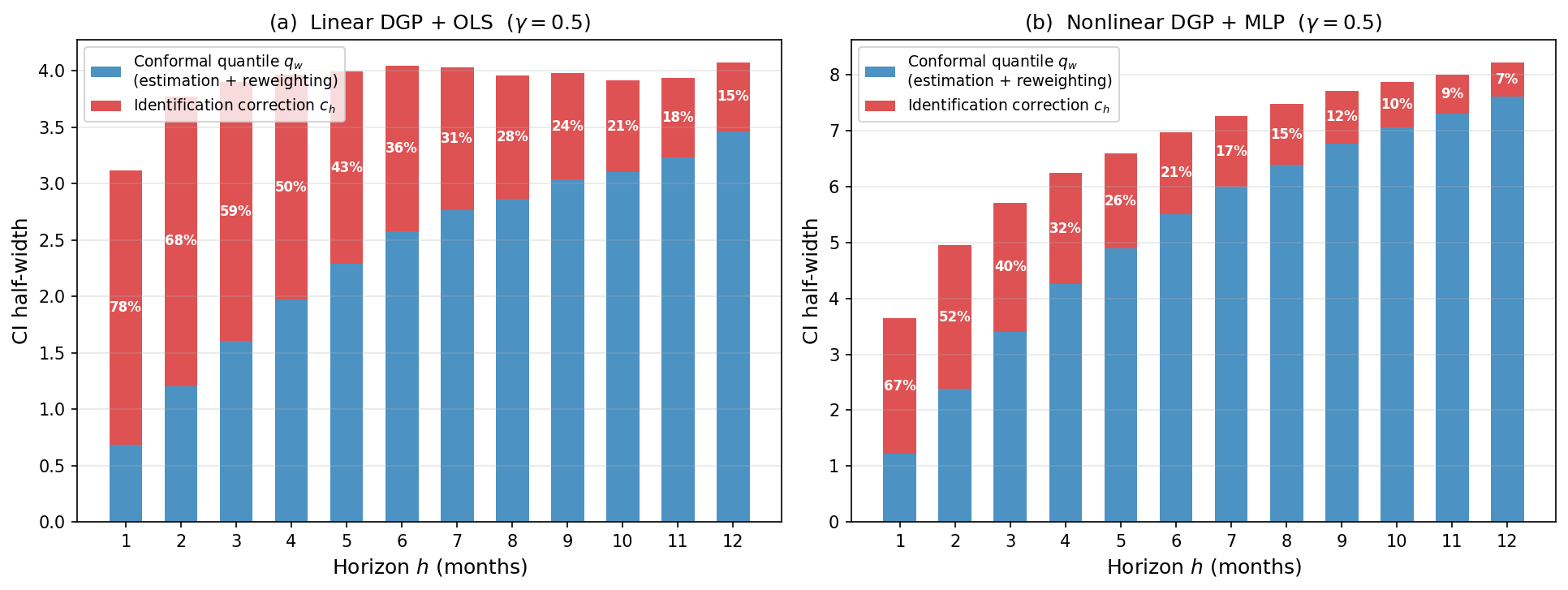}
\caption{CI half-width decomposition at $\gamma=0.5$. Blue: $q^w_h$ (estimation + reweighting). Red: $c_h$ (identification). (a) Linear DGP: $c_h$ 78\% at $h=1$. (b) Nonlinear DGP: same $c_h$, larger $q^w_h$.}
\label{fig:decomposition}
\end{figure}

\subsection{Empirical Calibration via NFCI}
\label{sec:exp-q3a}

In this section, we present when working on regulatory stress testing, how a practitioner calibrate confounder parameters in observable data. We use the Chicago Fed National Financial Conditions Index (NFCI) as a proxy for the latent confounder: fitting an AR(1) to monthly NFCI (1990--2019) gives $\hat\phi_U=0.973$ and $\hat\sigma_\nu=0.124$, and regressing UNRATE residuals on lagged NFCI yields $\hat\gamma_A=0.091$. Setting $\gamma_Y=\hat\gamma_A$ produces a fully calibrated parameter set derived entirely from observable data.

We re-examine the coverage with calibrated confounder strength. The calibrated confounding is modest ($\gamma=0.091$), but the results are decisive (Figure~\ref{fig:q3a}). On the linear DGP, OLS intervals achieve 2--8\% coverage---even mild confounding causes complete failure. Ours~(full) achieves 100\% across all horizons. On the nonlinear DGP, EnbPI and ours~(cal) reach only 22--70\%, while ours~(full) ranges from 74\% at $h=1$ to 100\% at $h=9$. The gap between ours~(cal) and ours~(full) is 52 percentage points at $h=1$: even a small, empirically calibrated confounding correction has a large coverage impact.

\begin{figure}[H]
\centering
\begin{minipage}{0.49\textwidth}
\centering
\includegraphics[width=\textwidth]{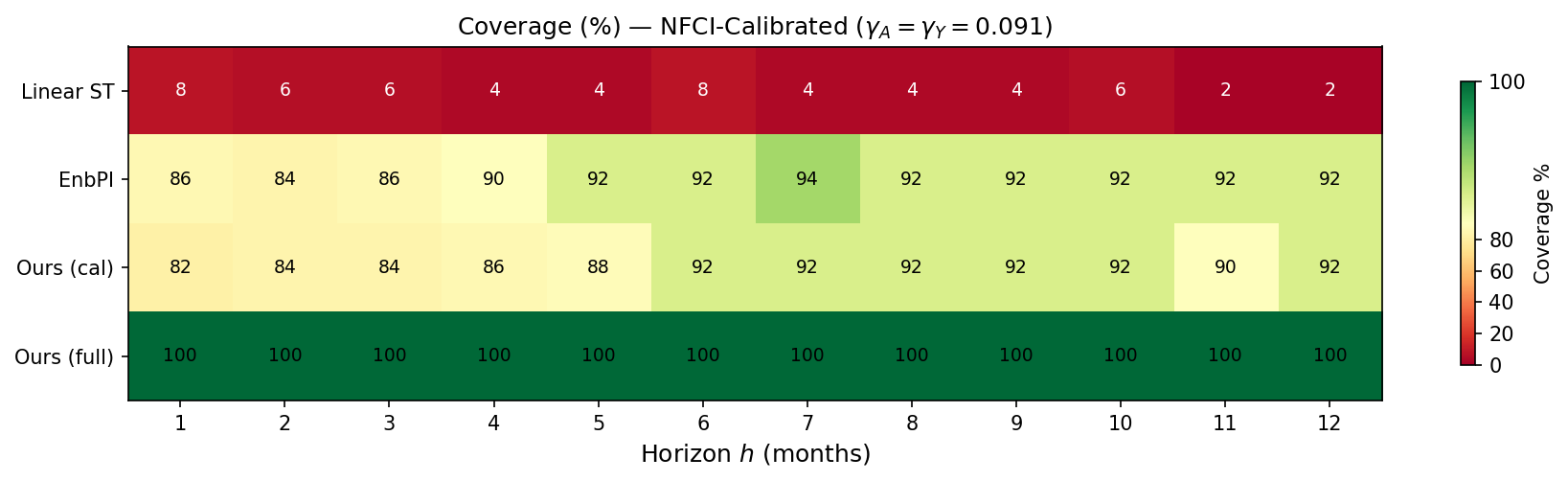}
\end{minipage}\hfill
\begin{minipage}{0.49\textwidth}
\centering
\includegraphics[width=\textwidth]{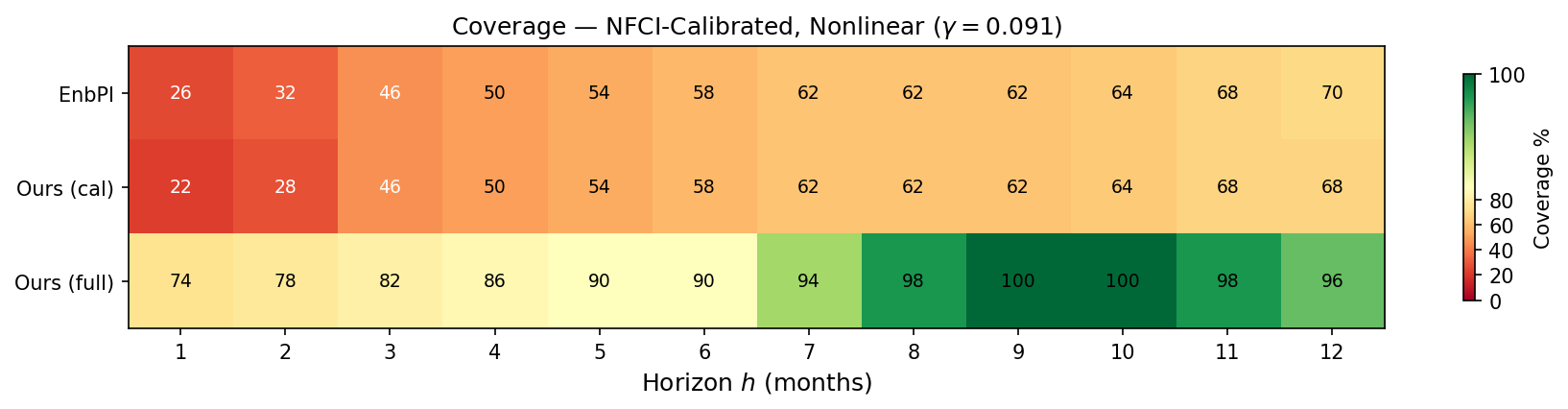}
\end{minipage}
\caption{NFCI-calibrated coverage ($\gamma_A=\gamma_Y=0.091$, nominal 90\%). \textbf{Left}: linear DGP + OLS. \textbf{Right}: nonlinear DGP + MLP. All confounder parameters estimated from NFCI. Ours~(full) is the only method with coverage $\ge 90\%$ under NFCI-calibrated confounding.}
\label{fig:q3a}
\end{figure}

%% ==================================================================
%% 8. DISCUSSION
%% ==================================================================

\section{Conclusion}
\label{sec:conclusion}

Regulatory stress testing asks a causal question but is answered by predictive methods, and the gap is quantitatively large: under NFCI-calibrated confounding, standard uncertainty methods cover the causal stress-period mean 0--8\% of the time versus a nominal 90\%. We have presented a framework that closes this gap through three modular pieces: a closed-form partial identification bound $c_h$, non-asymptotic error control for the industry-standard recursive rollout, and importance-weighted conformal calibration with abstention diagnostics. The combined interval $\hatmu_h(a)\pm(q^w_{1-\alpha}+c_h)$ separates estimation from identification uncertainty, each with its own guarantee, giving practitioners and regulators a transparent basis for decision-making. The framework does not eliminate uncertainty---it makes uncertainty visible and interpretable.

%% ==================================================================
%% REFERENCES
%% ==================================================================
\bibliographystyle{plainnat}
\bibliography{references}

%% ==================================================================
%% APPENDIX
%% ==================================================================
\newpage
\appendix

%% ------------------------------------------------------------------
\section{Proofs for Identification}
\label{app:proofs-id}

\subsection{Estimation of the Predictive Mean}
\label{app:proof-obs-id}

We formalize the claim in Section~\ref{sec:setup} that $\mu_h(a)$ is estimable from pre-stress data.

\begin{proposition}[Estimation of predictive mean]
\label{prop:obs-id}
Under Assumptions~\ref{ass:state}--\ref{ass:stationary}, the predictive mean $\mu_h(a)$ can be estimated from pre-stress data.
\end{proposition}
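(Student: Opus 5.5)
The plan is to show that $\mu_h(a)$ can be written as a finite composition of conditional-expectation operators. Each operator is the same function for every $t$ and is learnable from pre-stress pairs $(I_{i,t},A_{t+1},Y_{i,t+1})$. We then conclude by plugging in estimates, either recursively or directly.

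\emph{Step 1 (tower decomposition).} Fix an origin and condition on the initial state $I_{i,0}$. By the tower property,
\[
\mu_h(a)=\E\big[\E[Y_{i,h}\mid \mathcal{H}_{i,h-1},A_{1:h}]\;\big|\;A_{1:h}=a_{1:h}\big].
\]
Assumption~\ref{ass:state} replaces $\E[Y_{i,h}\mid\mathcal{H}_{i,h-1},A_h]$ by $m_{h-1}(I_{i,h-1},A_h):=\E[Y_{i,h}\mid I_{i,h-1},A_h]$. Here we implicitly require that, given $(\mathcal{H}_{i,h-1},A_h)$, the later components of $A_{1:h}$ carry no extra information; this holds because $A_{1:h}$ ends at $h$.

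\emph{Step 2 (time invariance).} Assumption~\ref{ass:stationary} gives $m_t\equiv m$ for all $t$. So the one-step regression function $m(\iota,a')=\E[Y_{i,t+1}\mid I_{i,t}=\iota,A_{t+1}=a']$ is a single function. It is identified from the pooled pre-stress joint law of $(I_{i,t},A_{t+1},Y_{i,t+1})$ over all $i$ and $t<T$, and any consistent regression of $Y_{i,t+1}$ on $(I_{i,t},A_{t+1})$ estimates it.

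\emph{Step 3 (iterate).} Since $I_{i,h-1}=g(\mathcal{H}_{i,h-1})$ and the history is built from $(Y_{i,s},X_{i,s},A_s)$, iterating Step 1 expresses $\mu_h(a)$ as $h$ nested applications of the stationary transition law of $(Y,X)$ given $(I,A)$. Each application is evaluated at the prescribed macro values $a_1,\dots,a_h$ and then averaged over the law of $I_{i,0}$. In the same way, one can apply the stationarity assumption to the horizon-$h$ map $m_h(\iota,a_{1:h})$ directly and estimate it by pooling rolling origins.

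\emph{Main obstacle.} The main difficulty is that stationarity is stated only for the conditional mean/law of $Y_{i,t+1}$. Propagating the state, however, also needs the conditional law of $X_{i,t+1}$, and of the whole next state, given $(I_{i,t},A_{t+1})$ to be time-invariant. I would read Assumption~\ref{ass:stationary} as applying to the full state transition. I would then handle support: estimation only works where the path $a$ and the reached states lie in the support of the pre-stress data, so I would state that condition explicitly as part of ``can be estimated.''
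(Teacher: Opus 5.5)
Your argument is the paper's argument. Its proof is the same forward induction: stationarity gives the law of $Y_{i,1}$ given $(I_{i,0},a_1)$, hence of $I_{i,1}$; Markov plus stationarity then handles $h=2$; induct. Your caveats are genuine points the paper's proof passes over: the $X$-transition must also be time-invariant (and Markov in law, since Assumption~\ref{ass:state} only concerns means), and $a$ and the reached states must lie in the pre-stress support. But Step~3 has a gap, and the paper's sentence ``the distribution of $I_{i,1}$ is known from the previous step'' shares it. After Step~1 what remains is $\E[m(I_{i,h-1},a_h)\mid \mathcal{H}_{i,0},A_{1:h}=a_{1:h}]$. This requires the law of $I_{i,h-1}$ given the \emph{entire} path, including $a_h$. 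Nesting one-step transition laws instead produces a law in which each $a_j$ enters only the transition at step $j$, so no later macro value ever informs an earlier outcome. Your remark that later components carry no information ``because $A_{1:h}$ ends at $h$'' is valid only at the outermost step. At step $j<h$, the values $a_{j+1},\dots,a_h$ postdate $Z_j:=(Y_{i,j},X_{i,j})$ yet sit in the conditioning set. By the chain rule in the order $A_1,Z_1,A_2,Z_2,\dots$,
\[
p(z_{1:h-1}\mid \mathcal{H}_{i,0},a_{1:h})\;\propto\;\prod_{j=1}^{h-1}p(z_j\mid \mathcal{H}_{i,j-1},a_j)\,p(a_{j+1}\mid \mathcal{H}_{i,j}),
\]
and your recursion keeps only the first factor in each term. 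Dropping the factors $p(a_{j+1}\mid\mathcal{H}_{i,j})$ is legitimate only if their product does not vary with $z_{1:h-1}$. In practice this is a non-anticipation condition (the macro transition does not depend on the unit's own outcomes given the macro past), which Assumptions~\ref{ass:state}--\ref{ass:stationary} do not contain.

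In this paper that condition actually fails. Take Assumption~\ref{ass:confounding} with persistent $U$, e.g.\ the AR(1) model of Corollary~\ref{cor:ar1} with mutually independent innovations. There $Y_{i,1}$ loads on $\gamma_Y U_0$ and $A_2$ on $\gamma_A U_1=\gamma_A(\phi_U U_0+\nu_1)$, so $\Cov(Y_{i,1},A_2\mid I_{i,0},A_1)=\gamma_Y\gamma_A\phi_U\Var(U_0\mid I_{i,0},A_1)$. This is nonzero whenever $\gamma_Y\gamma_A\phi_U\neq0$, so fixing $A_2=a_2$ shifts the law of $Y_{i,1}$, hence of $I_{i,1}$. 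The rest of the paper is consistent with this reading of $\mu_h$. The proof of Theorem~\ref{thm:partial-id} conditions the state term on $A_{1:2}=a_{1:2}$, and Corollary~\ref{cor:ar1} uses the smoothed $\E[U_j\mid A_{1:h}=a_{1:h}]$. A forward recursion, by contrast, only ever conditions $U_j$ on macro values up to $a_{j+1}$.

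To close the gap you need an extra ingredient; there are three options:
\begin{itemize}
\item Assume non-anticipation outright. This excludes persistent confounding ($\phi_U\neq0$), which is the case the paper models.
\item Also assume a Markov, time-invariant macro transition $p(a_{t+1}\mid\mathcal{H}_{i,t})=p(a_{t+1}\mid I_{i,t})$, estimate it, and reweight simulated state paths by $\prod_{j=1}^{h-1}p(a_{j+1}\mid I_{i,j})$.
\item Use the direct map $m_h$. This route does condition on the full path, but time-invariance of $\cL(Y_{i,t+h}\mid I_{i,t},A_{t+1:t+h})$ is a stationarity assumption on whole $h$-windows. It does not follow from the one-step Assumption~\ref{ass:stationary}, so it is not available ``in the same way'' as your Step~2.
\end{itemize}
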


\begin{proof}
The initial state $I_{i,0}$ is observed. At $h=1$: by Assumption~\ref{ass:stationary}, the distribution of $Y_{i,1}$ given $I_{i,0}$ and $A_1=a_1$ is the same as in pre-stress data, so $\mu_1(a)$ is estimable. This also determines the distribution of $Y_{i,1}$ under $a_1$, which, together with $I_{i,0}$ and $a_1$, determines the distribution of $I_{i,1}=g(\cH_{i,1})$. At $h=2$: by Assumption~\ref{ass:state}, $Y_{i,2}$ depends on history only through $I_{i,1}$ and $A_2=a_2$. The distribution of $I_{i,1}$ is known from the previous step; stationarity applies again, so $\mu_2(a)$ is estimable. Induction to horizon $H$ completes the proof.
\end{proof}

\subsection{Proof of Theorem~\ref{thm:partial-id}}
\label{app:proof-partial-id}

\begin{proof}
Define $b_j := \E[U_{j-1}\mid A_{1:j}=a_{1:j}]$, and let $I^*_{i,t},I_{i,t}$ denote the state under the causal and predictive distributions.

\textbf{$h=1$.} By Assumption~\ref{ass:confounding}(a), $Y_{i,1}=m(I_{i,0},a_1)+\gamma_Y U_0 + \xi_{i,1}$. Since $I_{i,0}$ is observed (same under both distributions):
\[
\mu^*_1(a) = \E[m(I_{i,0},a_1)] + \gamma_Y\cdot\E[U_0] = \E[m(I_{i,0},a_1)], \quad
\mu_1(a) = \E[m(I_{i,0},a_1)] + \gamma_Y\cdot b_1.
\]
The causal mean uses $\E[U_0]=0$ (setting $A_1$ externally severs the link $U_0\to A_1$); the predictive mean uses $\E[U_0\mid A_1=a_1]=b_1$ (conditioning on $A_1=f(A_0)+\gamma_A U_0+\eta_1$ shifts $U_0$, by Assumption~\ref{ass:confounding}(b)). Gap: $\Delta_1(a)=-\gamma_Y b_1$. $b_1$ is finite by Assumption~\ref{ass:confounding}(c); $|\gamma_Y|\le\bar\gamma_Y$ by Assumption~\ref{ass:sensitivity}; so $|\Delta_1(a)|\le\bar\gamma_Y\sigma_U<\infty$.

The state then updates: $I_{i,1}$ includes $Y_{i,1}$, so $I^*_{i,1}$ and $I_{i,1}$ differ through $\gamma_Y U_0$ inside $Y_{i,1}$.

\textbf{$h=2$.} $Y_{i,2}=m(I_{i,1},a_2)+\gamma_Y U_1+\xi_{i,2}$, so
\[
\mu^*_2(a) = \E[m(I^*_{i,1},a_2)], \quad \mu_2(a) = \E[m(I_{i,1},a_2)\mid A_{1:2}=a_{1:2}] + \gamma_Y b_2.
\]
The gap has two terms: $\Delta_2(a) = D_2 - \gamma_Y b_2$ with $D_2 := \E[m(I^*_{i,1},a_2)] - \E[m(I_{i,1},a_2)\mid A_{1:2}=a_{1:2}]$. $D_2$ is state propagation. All terms are finite.

\textbf{General $h$.} Define $D_j := \E[m(I^*_{i,j-1},a_j)] - \E[m(I_{i,j-1},a_j)\mid A_{1:j}=a_{1:j}]$ with $D_1:=0$. Then $\Delta_h(a) = \sum_{j=1}^h D_j - \gamma_Y\sum_{j=1}^h b_j$. Each $b_j$ is finite (Assumption~\ref{ass:confounding}(c)); each $D_j$ is a difference of finite expectations; the sum has $h<\infty$ terms. Hence $|\Delta_h(a)| =: c_h < \infty$ and $\mu^*_h(a)\in[\mu_h(a)-c_h,\mu_h(a)+c_h]$.
\end{proof}

\subsection{Proof of Corollary~\ref{cor:ar1}}
\label{app:proof-ar1}

\begin{proof}
By condition (iii), $Y_{i,t+1} = \beta Y_{i,t} + g(a_{t+1}) + \gamma_Y U_t + \xi_{i,t+1}$. Iterating from $Y_{i,0}$:
\[
Y_{i,h} = \beta^h Y_{i,0} + \sum_{j=0}^{h-1}\beta^{h-1-j}[g(a_{j+1}) + \gamma_Y U_j + \xi_{i,j+1}].
\]
\emph{Causal mean.} Setting $A_{1:h}=a_{1:h}$ externally severs $U\to A$; $\E[U_j]=0$, $\E[\xi_{i,j+1}]=0$:
$\mu^*_h(a) = \beta^h\E[Y_{i,0}] + \sum_{j=0}^{h-1}\beta^{h-1-j}g(a_{j+1})$.

\emph{Predictive mean.} Conditioning on $A_{1:h}=a_{1:h}$ shifts each $U_j$; since $Y_{i,h}$ is linear in $(U_0,\ldots,U_{h-1})$:
$\mu_h(a) = \beta^h\E[Y_{i,0}] + \sum_{j=0}^{h-1}\beta^{h-1-j}[g(a_{j+1}) + \gamma_Y\E[U_j\mid A_{1:h}=a_{1:h}]]$.

\emph{Gap.} The $g$ and $Y_{i,0}$ terms cancel:
$\Delta_h(a) = -\gamma_Y\sum_{j=0}^{h-1}\beta^{h-1-j}\E[U_j\mid A_{1:h}=a_{1:h}]$.

\emph{Closed form.} By conditions (i) and (ii), $(\mathbf{U},\mathbf{r})$ is jointly Gaussian. Since $r_j=\gamma_A U_{j-1}+\eta_j$:
$\Cov(\mathbf{U},\mathbf{r})=\gamma_A\Sigma_{UU}$, $\Var(\mathbf{r})=\gamma_A^2\Sigma_{UU}+\sigma_\eta^2 I$.
Gaussian conditioning gives $\E[\mathbf{U}\mid\mathbf{r}]=\gamma_A\Sigma_{UU}(\gamma_A^2\Sigma_{UU}+\sigma_\eta^2 I)^{-1}\mathbf{r}$. Substituting and taking absolute values yields $c_h$.
\end{proof}

%% ------------------------------------------------------------------
\section{Proofs for Estimation}
\label{app:proofs-est}

To formalize the tradeoff between two estimators, we need two regularity conditions: the outcome model and state transition are smooth, and the learner achieves uniform one-step error.

\begin{assumption}[Smooth state dynamics]
\label{ass:lipschitz}
Let $\iota, \iota'$ denote two states, $a$ the macro value, $y, y'$ realized outcomes, and $\iota^+, \iota'^+$ the corresponding next-period states.  There exist constants $L_m, L_u \ge 0$ such that:
\begin{enumerate}[label=(\roman*), nosep]
    \item Outcome model: $|m(\iota, a) - m(\iota', a)| \le L_m \|\iota - \iota'\|$.
    \item State transition: Let $\iota^+$ and $\iota'^+$ be the next-period states obtained by updating $\iota$ with outcome $y$ and $\iota'$ with outcome $y'$, respectively, under the same macro value $a$.  Then $\|\iota^+ - \iota'^+\| \le L_u(\|\iota - \iota'\| + |y - y'|)$.
\end{enumerate}
\end{assumption}

\begin{assumption}[One-step excess risk]
\label{ass:onestep}
With probability $\ge 1 - \delta_n$: $\sup_{\iota, a} |\hat{m}(\iota, a) - m(\iota, a)| \le \epsilon_n$.
\end{assumption}

When $I_{i,t}=(Y_{i,t},Y_{i,t-1},\ldots)$ is a finite lag vector, the update shifts entries by one, so Assumption~\ref{ass:lipschitz}(ii) holds with $L_u=1$. The \emph{amplification rate} $\rho := L_u(1+L_m)$ governs how one-step errors compound: if $\rho < 1$, errors shrink with each step; if $\rho > 1$, they grow exponentially. The following theorem makes this precise.

\begin{theorem}[Recursive estimation error]
\label{thm:oracle}
Under Assumptions~\ref{ass:lipschitz}--\ref{ass:onestep}, with $\rho := L_u(1+L_m)$ and $\Gamma_h := \sum_{j=0}^{h-1}\rho^j$,
\[
|\hatmu_h(a)-\mu_h(a)| \le \underbrace{\epsilon_n \Gamma_h}_{\text{learner}} + \underbrace{|b_h(a)|}_{\text{mean-state bias}} + \underbrace{O_p(N^{-1/2})}_{\text{sampling}},
\]
where $b_h(a)$ is zero for affine $m$.
\end{theorem}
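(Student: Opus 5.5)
The proof splits the error into three pieces by the triangle inequality. Let $\hat I_{i,t}$ denote the rollout state built from predicted outcomes, and $I_{i,t}$ the true (random) state under the predictive distribution along $a$. Write $\hat\mu_h(a) = N^{-1}\sum_i \hat Y_{i,h}$ with $\hat Y_{i,h} = \hat m(\hat I_{i,h-1},a_h)$. Let $\bar\mu_h(a) := N^{-1}\sum_i \E[Y_{i,h}\mid I_{i,0}, A_{1:h}=a_{1:h}]$ be the in-sample population target. Then
\[
|\hat\mu_h-\mu_h| \le |\hat\mu_h - \tilde\mu_h| + |\tilde\mu_h - \bar\mu_h| + |\bar\mu_h-\mu_h|.
\]
Here $\tilde\mu_h := N^{-1}\sum_i \tilde Y_{i,h}$ is the ``oracle deterministic rollout'': it uses the true $m$ and plugs conditional-mean outcomes into the state, so $\tilde Y_{i,t+1}=m(\tilde I_{i,t},a_{t+1})$. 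The three terms are, respectively, the learner term, the mean-state bias $b_h(a):=\tilde\mu_h-\bar\mu_h$, and the $O_p(N^{-1/2})$ sampling term.

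\textbf{Learner term.} Work on the event of Assumption~\ref{ass:onestep}, which has probability $\ge 1-\delta_n$. Define $e_t := \sup_i \|\hat I_{i,t}-\tilde I_{i,t}\|$ and $d_t := \sup_i|\hat Y_{i,t}-\tilde Y_{i,t}|$. By adding and subtracting $m(\hat I_{i,t-1},a_t)$, and then using Assumption~\ref{ass:onestep} and Assumption~\ref{ass:lipschitz}(i),
\[
d_t \le \epsilon_n + L_m e_{t-1}.
\]
By Assumption~\ref{ass:lipschitz}(ii), $e_t \le L_u(e_{t-1}+d_t)$. Since $e_0=0$ (the initial state is observed), we get $d_1\le\epsilon_n$. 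The natural quantity to induct on is $\kappa_t := e_{t-1}+d_t$, the total discrepancy entering step $t$. We have
\[
\kappa_{t+1} = e_t + d_{t+1} \le (1+L_m)e_t + \epsilon_n \le (1+L_m)L_u\kappa_t + \epsilon_n = \rho\kappa_t+\epsilon_n.
\]
Unrolling gives $\kappa_h \le \epsilon_n\Gamma_h$. Since $d_h \le \kappa_h$, averaging over $i$ yields $|\hat\mu_h-\tilde\mu_h|\le\epsilon_n\Gamma_h$. The main bookkeeping hazard is making sure the constant comes out exactly $\rho=L_u(1+L_m)$ rather than something like $L_uL_m+1$. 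Tracking $\kappa_t$ instead of $e_t$ alone is what delivers that constant.

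\textbf{Mean-state bias.} The term $b_h(a)$ is defined as the gap between propagating the conditional mean of outcomes through $m$ and the conditional mean of $m$ evaluated at the random state. This is a Jensen-type gap. If $m$ is affine in the state and the state update is affine (the lag-vector case), then linearity of expectation makes $\E[m(I_{i,t},a)\mid\cdot]=m(\E[I_{i,t}\mid\cdot],a)$ at every step. An induction on $t$ then gives $\tilde Y_{i,t}=\E[Y_{i,t}\mid I_{i,0},a_{1:t}]$, so $b_h(a)=0$. For general $m$ we simply carry $b_h(a)$ as stated; no bound on it is claimed.

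\textbf{Sampling term.} $\bar\mu_h$ is an average of $N$ terms that are conditionally independent given the common path $A_{1:h}=a_{1:h}$. By Assumption~\ref{ass:confounding}, units share only the common shocks $U$ and $A$, and we condition on $A$. The remaining dependence through $U$ cancels between $\bar\mu_h$ and $\mu_h$, because both condition on the same path. A conditional Chebyshev or CLT argument over the initial states $I_{i,0}$ with finite second moments then gives $|\bar\mu_h-\mu_h|=O_p(N^{-1/2})$. This step needs a finite-variance condition, which I would state as an implicit regularity requirement.
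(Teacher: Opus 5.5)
Your proof is correct and follows the paper's argument: the same three-way split through the true-model deterministic rollout, the same Lipschitz recursion giving $\epsilon_n\Gamma_h$, a Jensen-type bias that vanishes under affinity, and an LLN for the cross-sectional term. The differences are cosmetic. You place the bias in-sample and apply the LLN to $\E[Y_{i,h}\mid I_{i,0},A_{1:h}=a_{1:h}]$, whereas the paper applies the LLN to the deterministic rollout and defines $b_h(a)=\E[\bar{Y}_{i,h}]-\mu_h(a)$ at the population level; the two versions differ by $O_p(N^{-1/2})$. Also, the paper gets the same rate $\rho$ by recursing on the state error alone, $e_j\le\rho e_{j-1}+L_u\epsilon_n$, so your auxiliary $\kappa_t$ is convenient but is not what makes the constant come out as $\rho$.
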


Proof in Appendix~\ref{app:proof-oracle}. The first term compounds one-step errors through the rollout; $\Gamma_h \to 1/(1-\rho)$ when $\rho<1$ and grows exponentially when $\rho>1$. The second arises because the rollout feeds means back into $m$ rather than averaging over the stochastic outcome distribution. The third is cross-sectional noise. On the other side, the direct estimator avoids this compounding entirely.

\begin{corollary}[Direct estimator error]
\label{cor:direct}
If $\sup_{\iota,a_{1:h}}|\hatm_h(\iota,a_{1:h})-m_h(\iota,a_{1:h})|\le \epsilon^{\mathrm{dir}}_{n,h}$ with prob.\ $\ge 1-\delta_n$, then $|\hatmu_h(a)-\mu_h(a)| \le \epsilon^{\mathrm{dir}}_{n,h} + O_p(N^{-1/2})$.
\end{corollary}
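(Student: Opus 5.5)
The plan is to argue directly from the definition of the direct estimator, $\hatmu_h(a) = N^{-1}\sum_i \hatm_h(I_{i,0}, a_{1:h})$, which averages the fitted horizon-$h$ regression over the observed initial states. Under Assumptions~\ref{ass:state}--\ref{ass:stationary}, pooling across rolling origins identifies $m_h$. The iterated-expectations argument of Proposition~\ref{prop:obs-id} then gives $\mu_h(a) = \E[m_h(I_{i,0},a_{1:h})]$, where the expectation is over the cross-sectional law of the initial state. Note that no rollout is involved, so neither the amplification factor $\Gamma_h$ nor the mean-state bias $b_h(a)$ of Theorem~\ref{thm:oracle} can arise.

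I will split the error by adding and subtracting the oracle average $N^{-1}\sum_i m_h(I_{i,0},a_{1:h})$:
\[
|\hatmu_h(a)-\mu_h(a)| \le \Big|N^{-1}\textstyle\sum_i \big(\hatm_h(I_{i,0},a_{1:h}) - m_h(I_{i,0},a_{1:h})\big)\Big| + \Big|N^{-1}\textstyle\sum_i m_h(I_{i,0},a_{1:h}) - \E[m_h(I_{i,0},a_{1:h})]\Big|.
\]
On the event of probability at least $1-\delta_n$ where the uniform bound holds, every summand in the first term is at most $\epsilon^{\mathrm{dir}}_{n,h}$. Hence the first term is at most $\epsilon^{\mathrm{dir}}_{n,h}$, by the triangle inequality and because a sup bound survives averaging. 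The bound holds deterministically on that event, so dependence between the fitted $\hatm_h$ and the evaluation states does not matter.

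The second term is a centered average of $N$ terms $m_h(I_{i,0},a_{1:h})$ over units at a fixed origin. I will treat the initial states as i.i.d. across units, or at least as having summable cross-sectional covariance, with $\Var(m_h(I_{i,0},a_{1:h}))<\infty$. Chebyshev's inequality then gives $O_p(N^{-1/2})$. Finite variance can be justified through the finite second moments implicit in Assumption~\ref{ass:confounding}: $m_h$ is a conditional expectation of $Y_{i,h}$, so its variance is bounded by that of $Y_{i,h}$.

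The main obstacle is this sampling term. The paper states no explicit cross-sectional independence assumption, and the common macro path $A$ and confounder $U$ induce correlation across units. I will handle it by conditioning on the common path $A_{1:h}=a_{1:h}$ and on the initial aggregate state. The remaining randomness is then only in the idiosyncratic initial states, which I will assume conditionally i.i.d., matching the same implicit assumption behind the $O_p(N^{-1/2})$ term in Theorem~\ref{thm:oracle}. Combining the two bounds on the event of probability at least $1-\delta_n$ gives the stated result.
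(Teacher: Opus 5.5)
Your argument is correct and is essentially the paper's. The paper gives no separate proof of this corollary; the intended argument is the proof of Theorem~\ref{thm:oracle} with three simplifications, which is exactly what you do:
\begin{itemize}
\item the learner term (A) reduces to a single application of the sup-norm bound, so there is no $\Gamma_h$ compounding;
\item the sampling term (B) is handled by a law of large numbers over units;
\item the mean-state bias (C) vanishes because $m_h$ is already $\E[Y_{i,h}\mid I_{i,0},A_{1:h}]$.
\end{itemize}
Your Chebyshev step under a finite-variance condition, and your explicit conditioning on the common path and the initial aggregate state, make precise what the paper leaves implicit in its ``By the LLN'' step. That conditioning is also what reconciles your target $\E[m_h(I_{i,0},a_{1:h})]$ with the tower-property form $\E[m_h(I_{i,0},a_{1:h})\mid A_{1:h}=a_{1:h}]$.
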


The amplification rate $\rho$ provides a practical rule. When $\rho<1$, $\Gamma_h \to 1/(1-\rho)$ stabilizes and recursive is reliable. When $\rho>1$, switch to direct beyond $h^* := \min\{h:\epsilon_n\Gamma_h > \epsilon^{\mathrm{dir}}_{n,h}\}$. When $\rho\approx 1$, compare empirically. In practice $\rho$ need not be computed analytically---it suffices to observe whether recursive rollout errors grow or stabilize with horizon.

\subsection{Proof of Theorem~\ref{thm:oracle}}
\label{app:proof-oracle}

\begin{proof}
Define the \emph{true-model rollout} $\barY_{i,j}$: starting from $I_{i,0}$, iterate $\barY_{i,j}=m(\barI_{i,j-1},a_j)$ and $\barI_{i,j}$ via the same update but with $\barY_{i,j}$ in place of the true outcome. By definition $\hatmu_h(a)=N^{-1}\sum_i\hatY_{i,h}$; adding/subtracting $\barY_{i,h}$ and $\E[\barY_{i,h}]$:
\[
\hatmu_h(a)-\mu_h(a) = \underbrace{N^{-1}\textstyle\sum_i(\hatY_{i,h}-\barY_{i,h})}_{\text{(A)}} + \underbrace{N^{-1}\textstyle\sum_i \barY_{i,h}-\E[\barY_{i,h}]}_{\text{(B)}} + \underbrace{\E[\barY_{i,h}]-\mu_h(a)}_{\text{(C)}}.
\]

\textbf{(A) Learner error.} At initialization, $\hatI_{i,0}=\barI_{i,0}=I_{i,0}$. At step $j$:
\begin{align*}
|\hatY_{i,j}-\barY_{i,j}| &= |\hatm(\hatI_{i,j-1},a_j) - m(\barI_{i,j-1},a_j)| \\
&\le \underbrace{|\hatm(\hatI_{i,j-1},a_j)-m(\hatI_{i,j-1},a_j)|}_{\le \epsilon_n} + \underbrace{|m(\hatI_{i,j-1},a_j)-m(\barI_{i,j-1},a_j)|}_{\le L_m\|\hatI_{i,j-1}-\barI_{i,j-1}\|} \\
&\le \epsilon_n + L_m\|\hatI_{i,j-1}-\barI_{i,j-1}\|,
\end{align*}
by Assumptions~\ref{ass:onestep} and~\ref{ass:lipschitz}(i). The state error updates by Assumption~\ref{ass:lipschitz}(ii):
\[
\|\hatI_{i,j}-\barI_{i,j}\| \le L_u(\|\hatI_{i,j-1}-\barI_{i,j-1}\| + |\hatY_{i,j}-\barY_{i,j}|) \le \rho\|\hatI_{i,j-1}-\barI_{i,j-1}\| + L_u\epsilon_n.
\]
Unrolling from $\|\hatI_{i,0}-\barI_{i,0}\|=0$: $\|\hatI_{i,j}-\barI_{i,j}\| \le L_u\epsilon_n\Gamma_j$. Substituting $\Gamma_{h-1}$ back:
\[
|\hatY_{i,h}-\barY_{i,h}| \le \epsilon_n + L_m L_u\epsilon_n\Gamma_{h-1} \le \epsilon_n + \rho\epsilon_n\Gamma_{h-1} = \epsilon_n(1+\rho\Gamma_{h-1}) = \epsilon_n\Gamma_h,
\]
using $L_m L_u \le \rho$ and $\Gamma_h = 1+\rho\Gamma_{h-1}$. All steps use the single event $\{\sup|\hatm-m|\le\epsilon_n\}$; no horizon union bound needed.

\textbf{(B) Sampling noise.} $\barY_{i,h}$ is a deterministic function of $I_{i,0}$, which varies across units. By the LLN, $\text{(B)}=O_p(N^{-1/2})$.

\textbf{(C) Mean-state bias.} The true-model rollout feeds $m(\barI_{i,j-1},a_j)$ back into the state rather than averaging over the stochastic outcome distribution. $\text{(C)}=b_h(a)$ vanishes when $m$ is affine in the state (Jensen's gap is zero).

Combining: $|\hatmu_h(a)-\mu_h(a)|\le\epsilon_n\Gamma_h + |b_h(a)| + O_p(N^{-1/2})$.
\end{proof}

%% ------------------------------------------------------------------
\section{Proofs for Inference}
\label{app:proofs-inf}

\subsection{Proof of Proposition~\ref{prop:weighted-cal}}
\label{app:proof-weighted-cal}

\begin{proof}
The event $\mu_h(a)\in[\hatmu_h(a)\pm q^w_{1-\alpha}]$ is equivalent to $S_{B+1}\le q^w_{1-\alpha}$ with $S_{B+1}:=|\hatmu_h(a)-\mu_h(a)|$. We bound $\Pbb(S_{B+1}>q^w_{1-\alpha})$.

\textbf{Step 1 (exchangeable scores).} Assume $S_1,\ldots,S_B,S_{B+1}$ are exchangeable. Assumption~\ref{ass:mixing-lr-hom}(iii) ensures the calibration-to-stress shift is fully captured by $w_b$; by Theorem 2 of \citet{tibshirani2019covariate},
$\Pbb(S_{B+1}>q^w_{1-\alpha})\le\alpha + \Rweight$
with $\Rweight = W_{\max}/(\sum_b w_b + W_{\max})$.

\textbf{Step 2 (non-exchangeable scores).} The scores come from rolling origins $g$ periods apart, so they are temporally dependent. By Berbee's coupling lemma, for a $\beta$-mixing process there exists an exchangeable sequence $S'_1,\ldots,S'_{B+1}$ with $\Pbb(S_b\ne S'_b)\le\beta(g)$ per $b$. By Step 1 applied to $S'$:
$\Pbb(S'_{B+1}>q'^w_{1-\alpha})\le\alpha+\Rweight$.
By union bound, $\Pbb(\exists b:S_b\ne S'_b)\le(B+1)\beta(g)$. On the agreement event, $q^w_{1-\alpha}=q'^w_{1-\alpha}$ and Step 1 applies directly. Therefore:
\[
\Pbb(S_{B+1}>q^w_{1-\alpha}) \le (\alpha + \Rweight) + (B+1)\beta(g) \le \alpha + \Rweight + \Rmix,
\]
with $\Rmix=2(B+1)\beta(g)$ (the factor 2 absorbs the Berbee-based constant). Taking complements gives the claim.
\end{proof}

\subsection{Proof of Theorem~\ref{thm:ci} and Contrast Extension}
\label{app:proof-ci}

\begin{proof}[Proof of Theorem~\ref{thm:ci}]
By Proposition~\ref{prop:weighted-cal}, on an event of probability $\ge 1-\alpha-\Rmix-\Rweight$, $|\hatmu_h(a)-\mu_h(a)|\le q^w_{1-\alpha}$. By the maintained condition, $|\mu^*_h(a)-\mu_h(a)|\le c_h$. Triangle inequality:
$|\mu^*_h(a)-\hatmu_h(a)| \le |\mu^*_h(a)-\mu_h(a)| + |\mu_h(a)-\hatmu_h(a)| \le c_h + q^w_{1-\alpha}$.
\end{proof}

\begin{corollary}[CI for the path contrast]
\label{cor:ci-contrast}
Let $\hattau_h := \hatmu_h(a^S)-\hatmu_h(a^B)$ and $c_h := \max\{c_h(a^S),c_h(a^B)\}$. Then
\[
\Pbb\big(\tau^*_h\in[\hattau_h \pm (2q^w_{1-\alpha}+2c_h)]\big) \ge 1 - 2\alpha - 2\Rmix - 2\Rweight.
\]
\end{corollary}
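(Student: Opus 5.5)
The natural route applies Theorem~\ref{thm:ci} twice, once to each path, and combines the two events by a union bound and the triangle inequality. I take the causal contrast to be $\tau^*_h := \mu^*_h(a^S)-\mu^*_h(a^B)$, with predictive counterpart $\tau_h := \mu_h(a^S)-\mu_h(a^B)$. I also assume that $\hatmu_h(a^S)$ and $\hatmu_h(a^B)$ are each equipped with their own weighted conformal quantile. That is, the weights $w_b(a^S)$ and $w_b(a^B)$ are computed separately, giving quantiles $q^w_{1-\alpha}(a^S)$ and $q^w_{1-\alpha}(a^B)$. The $q^w_{1-\alpha}$ in the statement is then read as the larger of the two. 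Similarly, $\Rweight$ is read as the larger of the two extrapolation costs, so that a single symbol bounds both.

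\textbf{Step 1 (estimation layer).} Apply Proposition~\ref{prop:weighted-cal} separately at $a=a^S$ and $a=a^B$. This gives events $E_S$ and $E_B$ on which $|\hatmu_h(a^S)-\mu_h(a^S)|\le q^w_{1-\alpha}(a^S)$ and $|\hatmu_h(a^B)-\mu_h(a^B)|\le q^w_{1-\alpha}(a^B)$. Each event fails with probability at most $\alpha+\Rmix+\Rweight$. The two events share the same calibration scores, so they are not independent, and I use no independence: a plain union bound gives $\Pbb(E_S\cap E_B)\ge 1-2\alpha-2\Rmix-2\Rweight$. On $E_S\cap E_B$, the triangle inequality gives $|\hattau_h-\tau_h|\le q^w_{1-\alpha}(a^S)+q^w_{1-\alpha}(a^B)\le 2q^w_{1-\alpha}$.

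\textbf{Step 2 (identification layer).} By Theorem~\ref{thm:partial-id}, applied deterministically to each path, $|\mu^*_h(a^S)-\mu_h(a^S)|\le c_h(a^S)$ and $|\mu^*_h(a^B)-\mu_h(a^B)|\le c_h(a^B)$. These bounds hold under the maintained sensitivity assumptions, exactly as in Theorem~\ref{thm:ci}. Hence $|\tau^*_h-\tau_h|\le c_h(a^S)+c_h(a^B)\le 2\max\{c_h(a^S),c_h(a^B)\}=2c_h$.

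Combining the two steps with one more triangle inequality on $E_S\cap E_B$ gives $|\tau^*_h-\hattau_h|\le 2q^w_{1-\alpha}+2c_h$, with the stated probability.

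\textbf{Main obstacle.} The real issue is bookkeeping rather than difficulty. The calibration quantile and $\Rweight$ depend on the target path through $w_b(\cdot)$, so the statement's single $q^w_{1-\alpha}$ and $\Rweight$ must be interpreted as maxima, or as common bounds, over the two paths. I would make that convention explicit at the start.

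A sharper constant is available but I will not pursue it. The identification part is conservative: because the envelopes are signed sums of inferred confounders, the two biases partially cancel, and the bound $|c_h(a^S)\text{-gap}-c_h(a^B)\text{-gap}|$ could be used instead. The factor $2$ matches the statement, so the crude bound suffices.
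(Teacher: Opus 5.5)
Your argument is correct and is essentially the paper's proof. The paper applies Theorem~\ref{thm:ci} at $a^S$ and at $a^B$, takes a union bound over the two failure events, and finishes with the triangle inequality; you do the same after unpacking Theorem~\ref{thm:ci} into Proposition~\ref{prop:weighted-cal} and Theorem~\ref{thm:partial-id}, and your convention that $q^w_{1-\alpha}$ and $\Rweight$ denote the larger of the two path-specific values is a useful clarification the paper leaves implicit, since both depend on the path through $w_b(\cdot)$.
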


\begin{proof}
Apply Theorem~\ref{thm:ci} to $a^S$ and $a^B$ separately; by union bound, both CIs hold with prob.\ $\ge 1-2\alpha-2\Rmix-2\Rweight$. On this event,
$|\tau^*_h-\hattau_h| \le |\mu^*_h(a^S)-\hatmu_h(a^S)| + |\mu^*_h(a^B)-\hatmu_h(a^B)| \le 2(q^w_{1-\alpha}+c_h)$.
\end{proof}

%% ------------------------------------------------------------------
\section{Algorithms}
\label{app:algorithms}

\begin{algorithm}[H]
\caption{Recursive Rollout Estimator}
\label{alg:rollout}
\begin{algorithmic}[1]
\REQUIRE Learned predictor $\hatm$, observed initial state $I_{i,0}$, macro path $a=(a_1,\ldots,a_H)$
\STATE Initialize $\hatI_{i,0}\leftarrow I_{i,0}$
\FOR{$j=1,\ldots,H$}
    \STATE $\hatY_{i,j} \leftarrow \hatm(\hatI_{i,j-1},a_j)$ \hfill \COMMENT{one-step prediction}
    \STATE $\hatI_{i,j} \leftarrow$ update state by replacing true outcome with $\hatY_{i,j}$ \hfill \COMMENT{feed back}
\ENDFOR
\RETURN $\hatmu_h(a) := N^{-1}\sum_{i=1}^N \hatY_{i,h}$
\end{algorithmic}
\end{algorithm}

%% ------------------------------------------------------------------
\section{Practical Diagnostics}
\label{app:diagnostics}

\begin{enumerate}[label=(\roman*),nosep]
\item \textbf{Rolling backtests}: one-step and multi-step prediction errors via panel cross-validation.
\item \textbf{Placebo tests}: estimated contrasts at fake $t_0'$ should be near zero (tests stationarity).
\item \textbf{Extrapolation diagnostics}: report $\Beff$ and $\Rweight$ from Proposition~\ref{prop:weighted-cal}.
\item \textbf{Local amplification}: perturb $\hatI_{i,t}$ along rollout trajectories, measure $|\Delta\hatY_{i,t+h}|/\|\delta\|$ (local $\rho$). If $>1$, flag expanding-system risk.
\item \textbf{Mean-state bias check}: compare the rollout against Monte Carlo g-formula (sample innovations from fitted residuals). Large discrepancy $\Rightarrow$ prefer direct.
\item \textbf{Covariate screening}: exclude variables affected by the macro path unless jointly modeled.
\end{enumerate}

%% ------------------------------------------------------------------
\section{Experiment Details}
\label{app:experiment_details}

\subsection{DGP Parameters}

Table~\ref{tab:dgp-params-app} lists all parameters. $A^{\mathrm{obs}}_t = \mathrm{UNRATE}_t + \gamma_A U_{t-1}$; $Y_{i,t+1} = m(Y_{i,t},X_i,A^{\mathrm{obs}}_{t+1}) + \gamma_Y U_t + \xi_{i,t+1}$; $U_{t+1}=\phi_U U_t+\nu_{t+1}$. Linear $m$: $\alpha+\beta_1 y+\beta_2 a+\beta_3 x$. Nonlinear $m$: add $\beta_4 a^2 + \beta_5\max(y-\bar y,0)$.

\begin{table}[h]
\centering
\small
\caption{DGP parameters. Top: outcome model. Middle: confounder (synthetic). Bottom: macro (fitted from UNRATE 1990--2024).}
\label{tab:dgp-params-app}
\begin{tabular}{@{}lll@{}}
\toprule
Symbol & Value & Description \\
\midrule
$\alpha$ & 1.0 & Intercept \\
$\beta_1$ & 0.85 & Outcome persistence \\
$\beta_2$ & 0.15 & Macro sensitivity \\
$\beta_3$ & 0.3 & Cross-sectional covariate \\
$\beta_4$ & 0.02 & Quadratic macro response (nonlinear only) \\
$\beta_5$ & 0.12 & Threshold amplifier (nonlinear only) \\
$\bar{y}$ & 18.0 & Distress threshold (nonlinear only) \\
$\sigma_\xi$ & 0.15 & Idiosyncratic noise std \\
$N$ & 3{,}000 & Cross-sectional units \\
$H$ & 12 & Forecast horizon (months) \\
\midrule
$\phi_U$ & 0.85 & Confounder persistence \\
$\sigma_\nu$ & 0.5 & Confounder innovation std \\
$(\gamma_A,\gamma_Y)$ & varies & Confounding strength \\
\midrule
$\hat\phi_A$ & 0.995 & AR(1) coefficient (fitted from UNRATE) \\
$\hat\mu_A$ & 5.84 & Sample mean of UNRATE (\%) \\
$\hat\sigma_\eta$ & 0.154 & AR(1) innovation std (fitted) \\
$\hat\sigma_A$ & 1.59 & Marginal std of UNRATE (\%) \\
$a^S$ & 9.03 & Stress scenario: $\hat\mu_A+2\hat\sigma_A$ (\%) \\
\bottomrule
\end{tabular}
\end{table}

\subsection{Temporal Structure and Learners}

The 420-month sample is divided into three non-overlapping windows: \textbf{Training} (1990--2014, $t=1,\ldots,300$): learner fits $\hatm$; includes 2001 and 2008 recessions. \textbf{Calibration} (2015--2019, $t=301,\ldots,360$): rolling-origin conformal scores; no parameter updates. \textbf{Stress horizon} (2020 onward, $t=361,\ldots,372$): rollout under $a^S_{1:H}$; coverage evaluated vs.\ $\mu^*_h(a^S)$.

Linear DGP estimated by OLS; nonlinear by two-layer MLP (64--32 units, tanh activation). Both learners observe $(Y_{i,t},X_i,A^{\mathrm{obs}}_t)$ but not $U_t$.

\subsection{NFCI Calibration Details}

The Chicago Fed NFCI is a weekly index of U.S.\ financial conditions (stress is positive). We aggregate to monthly (end-of-month) over 1990--2019. An AR(1) fit yields $\hat\phi_U=0.973$, $\hat\sigma_\nu=0.124$. Regressing $\hat\eta^A_t$ (UNRATE AR(1) residuals) on lagged NFCI (standardized) gives $\hat\gamma_A=0.091$ with robust SE. Setting $\gamma_Y=\hat\gamma_A$ yields the symmetric conservative calibration used in \S\ref{sec:exp-q3a}.

\subsection{Coverage Evaluation Protocol}

For each $(\gamma,\text{method},h)$ cell in the coverage heatmaps: 50 replications, each generating a fresh $(U,\xi)$ realization, fitting the learner on 1990--2014, computing $q^w_h$ on 2015--2019, rolling out on the flat stress path $a^S_{1:12}=9.03\%$, and checking whether $\mu^*_h(a^S)$---computed from the closed-form structural equations---lies within the reported CI. Coverage is the fraction of replications in which it does. Nominal coverage is 90\%.

\subsection{Future Empirical Application}

A planned Layer~3 application uses the Fannie Mae Single-Family Loan Performance dataset ($\sim\!2$M loans $\times$ 120 months, 2005--2019). Pipeline: feature engineering $\to$ panel cross-validation $\to$ learner horse race (LASSO, LightGBM, random forest, MLP) $\to$ placebo tests $\to$ recursive and direct rollout $\to$ calibration bands $\to$ partial identification with NFCI-anchored $c_h$. Stress path: CCAR Severely Adverse. Baselines: linear regulatory model and MLCM (binary treatment).

\end{document}